\documentclass[11pt]{article}

\usepackage[margin=1in]{geometry}
\usepackage{algorithm}
\usepackage{algorithmic}
\usepackage{amsmath, amssymb, amsthm}
\usepackage{hyperref}
\usepackage[nameinlink,capitalize,noabbrev]{cleveref}
\usepackage{graphicx}
\usepackage{natbib}
\usepackage{url}            
\usepackage{booktabs}       
\usepackage{amsfonts}       
\usepackage{nicefrac}       
\usepackage{microtype}      
\usepackage{xcolor}         
\usepackage{mathtools}
\usepackage{algorithm}
\usepackage{dsfont}
\usepackage{bbm}
\usepackage{longtable}
\usepackage{multirow}
\usepackage{tikz}
\usepackage{subcaption}
\usetikzlibrary{arrows.meta, positioning, shapes.geometric, backgrounds}
\usepackage[affil-it]{authblk}

\theoremstyle{plain}
\newtheorem{theorem}{Theorem}[section]
\newtheorem{proposition}[theorem]{Proposition}

\theoremstyle{definition}

\theoremstyle{remark}

\title{Reducing Learner Redundancy in Boosting via Residual Orthogonalization}

\author[a]{Ye Su}
\author[b]{Jipeng Guo}
\author[c]{Xin Xu}
\author[a]{Gangchun Zhang}
\author[a]{Jinxin Chen}
\author[d,*]{Di Wu}
\author[a,*]{Longlong Zhao}

\affil[a]{Shenzhen Institutes of Advanced Technology, Chinese Academy of Sciences, Shenzhen 518055, China}
\affil[b]{College of Information Science and Technology, Beijing University of Chemical Technology, Beijing 100029, China}
\affil[c]{School of Computer Science, Central China Normal University, Hubei 430000, China}
\affil[d]{the School of Computing, Engineering and Mathematical Sciences, La Trobe University, Melbourne VIC 3086, Australia}

\affil[*]{Corresponding authors. Emails: d.wu@latrobe.edu.au, ll.zhao@siat.ac.cn}

\date{}

\begin{document}

\maketitle

\begin{abstract}
While sequential residual fitting is the bedrock of standard boosting frameworks, it inherently breeds learner redundancy by repeatedly revisiting correlated error components. To address this bottleneck, we propose a shift from residual fitting to \textit{residual orthogonalization} and introduce SCBoost. Our framework tackles redundancy through two complementary mechanisms: Spectral Residual Projection (SRP) and Covariance-Regularized Weighting (CRW). During training, SRP projects each residual target onto the orthogonal complement of the historical prediction subspace, forcing successive learners to capture only novel empirical innovations. During aggregation, CRW optimizes ensemble weights on a validation set with an explicit covariance penalty to mitigate remaining correlations. Theoretically, we provide a finite-sample geometric characterization proving that SRP yields an exact additive residual-energy decomposition. Furthermore, under an isotropic-noise assumption, we rigorously establish the conditions under which this projection improves the effective Signal-to-Noise Ratio. Extensive experiments across ten benchmark datasets demonstrate that SCBoost delivers strong out-of-the-box performance, particularly in accuracy and F1 score. This work reinterprets boosting through a geometric lens, suggesting that explicit redundancy control is a principled and necessary step toward more efficient ensemble architectures.
\end{abstract}


\begin{figure*}[t]
    \centering
    \begin{subfigure}[b]{0.48\textwidth}
        \centering
        \begin{tikzpicture}[scale=0.8]
            \coordinate (O) at (0,0);
            \coordinate (T) at (5,4); 

            \fill[gray!10] (-0.5,-0.5) rectangle (5.5,4.5);
            \node[gray!50] at (5, 0.5) {Full Function Space};

            \draw[->, red!80, line width=1.5pt, line cap=round] (O) -- (2.5, 1.0) node[midway, below left, black, scale=0.8] {$h^{(1)}$};
            \draw[->, red!80, line width=1.5pt, line cap=round] (2.5, 1.0) -- (3.2, 2.5) node[midway, right, black, scale=0.8] {$h^{(2)}$};
            \draw[->, red!80, line width=1.5pt, line cap=round] (3.2, 2.5) -- (4.2, 1.8) node[midway, above, black, scale=0.8] {$h^{(3)}$};
            \draw[->, red!80, line width=1.5pt, line cap=round] (4.2, 1.8) -- (4.8, 3.5) node[midway, right, black, scale=0.8] {$h^{(4)}$};

            \draw[dashed, gray] (2.5, 1.0) circle (0.5);
            \draw[dashed, gray] (4.2, 1.8) circle (0.4);
            \node[gray, scale=0.7, align=center] at (1.5, 2.5) {Redundant \\ Gradient Components};

            \filldraw[black] (T) circle (2pt) node[above right] {Optimal $F^*$};
            
            \node[below, font=\bfseries] at (2.5,-1) {(a) Standard Residual Fitting};
        \end{tikzpicture}
    \end{subfigure}
    \hfill 
    \begin{subfigure}[b]{0.48\textwidth}
        \centering
        \begin{tikzpicture}[scale=0.8]
            \coordinate (O) at (0,0);
            \coordinate (T) at (5,4);

            \fill[blue!5] (-0.5,-0.5) rectangle (5.5,4.5);

            \draw[->, blue!80, line width=1.5pt, line cap=round] (O) -- (3.5, 0) node[midway, below, black, scale=0.8] {$h^{(1)}$};
 
            \draw[dashed, blue!40] (0,0) -- (4,0);
            \node[blue!60, scale=0.7] at (4.5, 0.3) {$\mathcal{H}_{t-1}$};
 
            \draw[->, blue!80, line width=1.5pt, line cap=round] (3.5, 0) -- (3.5, 3.0) node[midway, right, black, scale=0.8] {$h^{(2)} \perp h^{(1)}$};

            \draw[blue!60] (3.5, 0.3) -- (3.2, 0.3) -- (3.2, 0);

            \draw[->, blue!80, line width=1.5pt, line cap=round] (3.5, 3.0) -- (4.8, 3.8) node[midway, above left, black, scale=0.8] {$h^{(3)}$};

            \node[blue!80, scale=0.7, align=center] at (1.5, 3.5) {Purified Target \\ (Orthogonal Innovation)};

            \filldraw[black] (T) circle (2pt) node[above right] {Optimal $F^*$};

            \node[below, font=\bfseries] at (2.5,-1) {(b) SCBoost (Ours)};
        \end{tikzpicture}
    \end{subfigure}
    
    \caption{\textbf{Conceptual Landscape of Boosting Paradigms.} (a) Standard Boosting performs greedy descent in the original function space, leading to redundant updates and ``zig-zagging" behavior. (b) SCBoost projects each residual onto the orthogonal complement of the historical subspace $\mathcal{H}_{t-1}$ \textit{before} training, ensuring that each new learner captures geometrically distinct information and accelerates convergence towards the optimal ensemble $F^*$.}
    \label{fig:paradigm_shift}
\end{figure*}

\section{Introduction}

Modern boosting frameworks like XGBoost and LightGBM \cite{chen2016xgboost, ke2017lightgbm} owe their success primarily to engineering optimizations in efficiency and feature handling. However, their statistical core, sequential residual fitting, has remained largely unchanged for decades \cite{schapire1999brief, buhlmann2007boosting, mayr2014evolution}. This paradigm inherently breeds redundancy by training new learners on highly correlated residuals, creating a ``redundancy bottleneck'' that limits the ensemble's generalization capability. Breaking this ceiling requires addressing this statistical limitation directly, even if it necessitates rethinking the computational trade-offs that have defined the past decade of development.

While prior efforts have sought to promote diversity through methods like negative correlation learning (NCL) \cite{liu1999ensemble, liu2000evolutionary, wang2010negative} or randomization \cite{dietterich2000experimental, kotsiantis2011combining, wen2020batchensemble}, they often treat diversity as a secondary objective, imposing soft penalties that create an unstable trade-off with fitting accuracy. More importantly, they fail to address the root cause of the problem in boosting, the correlated nature of the residual targets themselves \cite{bartlett1998boosting}.

To break the redundancy bottleneck, we propose a shift from residual fitting to residual orthogonalization. As illustrated in Figure~\ref{fig:paradigm_shift}, this mechanism compels each new learner to approximate only the error component geometrically distinct from the existing ensemble. By projecting the learning target onto the functional null space of historical predictors, we eliminate signal redundancy and promote the extraction of novel information. The primary contribution of this study is the introduction of \textbf{residual orthogonalization} as a fundamental paradigm shift for boosting. We move beyond the traditional fit-and-add logic and establish a new implementation, \textbf{SCBoost}, whose contributions are structured as follows:
\begin{itemize}
\item \textbf{Core Principle}. We introduce \textbf{Spectral Residual Projection (SRP)} to modify the residual target before fitting each new learner. By applying spectral decomposition to the prediction history, SRP projects the residual onto the orthogonal complement of the selected historical prediction subspace.

\item \textbf{Geometric Characterization}. We show that SRP is an exact empirical orthogonal projection on the training sample. This yields an additive decomposition of the residual energy into a historical component and an orthogonal innovation component.

\item \textbf{Noise Interpretation}. Under an explicit fixed-subspace isotropic-noise assumption, we characterize how projection changes the noise energy and the effective SNR. The analysis shows that SNR improves only when the removed signal fraction is smaller than the removed noise fraction.

\item \textbf{Aggregation Strategy}. We introduce \textbf{Covariance-Regularized Weighting (CRW)} to aggregate the learned predictors. CRW uses validation-set covariance regularization to reduce the influence of highly correlated learners, and is motivated by the ambiguity decomposition under squared loss.
\end{itemize}

\section{SCBoost}
Standard boosting fits residuals sequentially but does not explicitly control the correlation between new learners and the existing ensemble, which can lead to redundant updates (Figure \ref{fig:paradigm_shift}(a)). We propose SCBoost, a boosting framework based on residual orthogonalization. Instead of fitting the raw residual directly, Spectral Residual Projection (SRP) projects the residual target onto the orthogonal complement of the historical prediction subspace (Figure \ref{fig:paradigm_shift}(b)). To aggregate the learned predictors, Covariance-Regularized Weighting (CRW) then assigns ensemble weights with an additional covariance penalty. SRP controls the target used for learner induction, while CRW controls the final aggregation.

\subsection{Spectral Residual Projection}
\label{subsec:SRP}

\textbf{Algorithm Description.}
Let the residual vector at iteration $t$ be $\mathbf{r}^{(t)} = \mathbf{y} - \sigma(\mathbf{F}^{(t-1)})$, where $\mathbf{F}^{(t-1)}$ is the current logit output on the training data and $\sigma(\cdot)$ is the sigmoid function. We maintain a prediction history matrix $\mathbf{H}^{(t-1)} = [\mathbf{h}^{(1)},\mathbf{h}^{(2)},\dots,\mathbf{h}^{(t-1)}] \in \mathbb{R}^{n\times(t-1)}$, where $\mathbf{h}^{(j)}$ denotes the prediction vector of the $j$-th base learner on the training data.

At step $t$, we compute the singular value decomposition $\mathbf{H}^{(t-1)} = \mathbf{U}\mathbf{\Sigma}\mathbf{V}^{\top}$. Let $\mathbf{U}_k=[\mathbf{u}*1,\dots,\mathbf{u}*k]\in\mathbb{R}^{n\times k}$ be the top-$k$ left singular vectors selected by the energy threshold $\alpha\in(0,1)$:
\begin{equation*}
\frac{\sum*{i=1}^{k}\sigma_i^2} {\sum*{j=1}^{\min(n,t-1)}\sigma_j^2} \geq \alpha .
\end{equation*}
We define the projection operators $\mathbf{P}_k=\mathbf{U}_k\mathbf{U}_k^\top$, $\mathbf{Q}_k=\mathbf{I}_n-\mathbf{P}_k$. The projected residual target is $\tilde{\mathbf{r}}^{(t)} = \mathbf{Q}_k\mathbf{r}^{(t)} = \mathbf{r}^{(t)} - \mathbf{U}_k(\mathbf{U}_k^\top\mathbf{r}^{(t)})$. The next learner is trained to fit $\tilde{\mathbf{r}}^{(t)}$ rather than the raw residual $\mathbf{r}^{(t)}$. This projection guarantees orthogonality of the training target to the selected historical prediction subspace. It does not by itself guarantee that the fitted learner $\mathbf{h}^{(t)}$ is exactly orthogonal to the previous learners, since the fitted learner also depends on the approximation capacity and optimization procedure of the base learner.

\textbf{Theoretical Analysis.}
We first record the deterministic geometric property of SRP. This result is finite-sample and does not require a distributional assumption.

\begin{proposition}[Empirical Orthogonal Projection. The detialed proof in Appendix \ref{app:sec:a}]
\label{prop:ortho}
Let $\mathcal{H}_{t-1}=\operatorname{span}(\mathbf{U}_k)$, where $\mathbf{U}_k$ has orthonormal columns. Let $\mathbf{P}_k=\mathbf{U}_k\mathbf{U}_k^\top$ and $\mathbf{Q}_k=\mathbf{I}_n-\mathbf{P}_k$. For any residual vector $\mathbf{r}^{(t)}\in\mathbb{R}^n$, the SRP target $\tilde{\mathbf{r}}^{(t)}=\mathbf{Q}*k\mathbf{r}^{(t)}$ satisfies $\tilde{\mathbf{r}}^{(t)} = \mathop{\arg\min}*{\mathbf{z}\in\mathbb{R}^n} \quad |\mathbf{z}-\mathbf{r}^{(t)}|*2^2 \ \text{s.t.} \quad \langle \mathbf{z},\mathbf{u}\rangle=0, \quad \forall \mathbf{u}\in\mathcal{H}*{t-1}.$ Moreover, $|\mathbf{r}^{(t)}|_2^2 = |\mathbf{P}_k\mathbf{r}^{(t)}|_2^2 + |\mathbf{Q}_k\mathbf{r}^{(t)}|_2^2$. Equivalently,
\begin{equation*}
|\tilde{\mathbf{r}}^{(t)}|_2^2 = |\mathbf{r}^{(t)}|*2^2 - \sum*{i=1}^{k}(\mathbf{u}_i^\top\mathbf{r}^{(t)})^2 .
\end{equation*}
\end{proposition}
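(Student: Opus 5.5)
The plan is to verify the basic projection identities, then show that $\mathbf{Q}_k\mathbf{r}^{(t)}$ is feasible and minimal by a Pythagorean argument, and finally read off the energy decomposition.

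\textbf{Projection identities.} Since $\mathbf{U}_k$ has orthonormal columns, $\mathbf{U}_k^\top\mathbf{U}_k=\mathbf{I}_k$. It follows that $\mathbf{P}_k$ is symmetric and idempotent. Hence $\mathbf{Q}_k$ is also symmetric and idempotent, and
\[
\mathbf{P}_k\mathbf{Q}_k=\mathbf{P}_k-\mathbf{P}_k^2=\mathbf{0}.
\]
Every $\mathbf{u}\in\mathcal{H}_{t-1}$ can be written $\mathbf{u}=\mathbf{U}_k\mathbf{a}$ for some $\mathbf{a}\in\mathbb{R}^k$. Therefore
\[
\langle \mathbf{Q}_k\mathbf{r}^{(t)},\mathbf{u}\rangle=\mathbf{a}^\top\mathbf{U}_k^\top(\mathbf{I}_n-\mathbf{U}_k\mathbf{U}_k^\top)\mathbf{r}^{(t)}=\mathbf{a}^\top(\mathbf{U}_k^\top-\mathbf{U}_k^\top)\mathbf{r}^{(t)}=0,
\]
so $\tilde{\mathbf{r}}^{(t)}$ is feasible for the constrained problem.

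\textbf{Optimality.} Take any feasible $\mathbf{z}$, i.e.\ $\mathbf{z}\perp\mathcal{H}_{t-1}$. Decompose
\[
\mathbf{z}-\mathbf{r}^{(t)}=(\mathbf{z}-\tilde{\mathbf{r}}^{(t)})-\mathbf{P}_k\mathbf{r}^{(t)}.
\]
The first term is orthogonal to $\mathcal{H}_{t-1}$, because both $\mathbf{z}$ and $\tilde{\mathbf{r}}^{(t)}$ are. The second term lies in $\mathcal{H}_{t-1}$. Pythagoras then gives
\[
\|\mathbf{z}-\mathbf{r}^{(t)}\|_2^2=\|\mathbf{z}-\tilde{\mathbf{r}}^{(t)}\|_2^2+\|\mathbf{P}_k\mathbf{r}^{(t)}\|_2^2\ \ge\ \|\mathbf{P}_k\mathbf{r}^{(t)}\|_2^2=\|\tilde{\mathbf{r}}^{(t)}-\mathbf{r}^{(t)}\|_2^2 .
\]
Equality holds iff $\mathbf{z}=\tilde{\mathbf{r}}^{(t)}$, which shows that $\tilde{\mathbf{r}}^{(t)}$ is the unique minimizer. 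An equivalent route is to write the Lagrangian and solve the KKT conditions, which yield $\mathbf{z}=\mathbf{r}^{(t)}-\mathbf{U}_k\boldsymbol{\lambda}$ with $\boldsymbol{\lambda}=\mathbf{U}_k^\top\mathbf{r}^{(t)}$. Uniqueness also follows from strict convexity of the objective over a linear subspace.

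\textbf{Energy decomposition.} Write $\mathbf{r}^{(t)}=\mathbf{P}_k\mathbf{r}^{(t)}+\mathbf{Q}_k\mathbf{r}^{(t)}$. The cross term vanishes:
\[
\langle\mathbf{P}_k\mathbf{r}^{(t)},\mathbf{Q}_k\mathbf{r}^{(t)}\rangle=\mathbf{r}^{(t)\top}\mathbf{P}_k\mathbf{Q}_k\mathbf{r}^{(t)}=0,
\]
which gives the additive identity. Finally,
\[
\|\mathbf{P}_k\mathbf{r}^{(t)}\|_2^2=\mathbf{r}^{(t)\top}\mathbf{U}_k\mathbf{U}_k^\top\mathbf{U}_k\mathbf{U}_k^\top\mathbf{r}^{(t)}=\|\mathbf{U}_k^\top\mathbf{r}^{(t)}\|_2^2=\sum_{i=1}^k(\mathbf{u}_i^\top\mathbf{r}^{(t)})^2.
\]
Substituting this into the additive identity gives the stated formula for $\|\tilde{\mathbf{r}}^{(t)}\|_2^2$.

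No step is a real obstacle. The only point requiring care is stating uniqueness of the minimizer cleanly, which the Pythagorean argument handles directly.
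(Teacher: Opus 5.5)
Your proposal is correct and follows essentially the same route as the paper: you show $\mathbf{P}_k$ and $\mathbf{Q}_k$ are symmetric idempotents, use the Pythagorean split $\mathbf{z}-\mathbf{r}^{(t)}=(\mathbf{z}-\tilde{\mathbf{r}}^{(t)})-\mathbf{P}_k\mathbf{r}^{(t)}$ to get optimality and uniqueness, and kill the cross term via $\mathbf{P}_k\mathbf{Q}_k=\mathbf{0}$. Your explicit check that $\tilde{\mathbf{r}}^{(t)}$ is feasible is a small step the paper leaves implicit when it calls $\mathbf{Q}_k$ the projector onto $\mathcal{H}_{t-1}^{\perp}$.
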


Proposition~\ref{prop:ortho} shows that SRP removes exactly the component of the current residual lying in the selected historical prediction subspace. The result is an empirical statement on the training vectors. It should not be interpreted as a guarantee of functional orthogonality outside the training sample.

We next give a limited statistical interpretation of SRP under an explicit fixed-subspace noise model. The assumption that the projection is fixed relative to the noise is essential. In the standard boosting implementation, the historical learners are trained from the same labels, so the projection matrix can be label-noise dependent. The following result should therefore be read as a fixed-subspace characterization of the projection operation, not as an unconditional robustness theorem for the full adaptive algorithm.

\begin{proposition}[Fixed-Subspace Noise and SNR Characterization. The detialed proof in Appendix \ref{app:sec:b}]
\label{prop:noise_snr}
Let $\mathbf{r}=\mathbf{s}+\boldsymbol{\epsilon}$, where $\mathbf{s}\in\mathbb{R}^n$ is deterministic and $\boldsymbol{\epsilon}\sim\mathcal{N}(\mathbf{0},\nu^2\mathbf{I}_n)$. Assume that $\mathbf{P}_k$ is a fixed rank-$k$ orthogonal projector independent of $\boldsymbol{\epsilon}$, and let $\mathbf{Q}_k=\mathbf{I}_n-\mathbf{P}_k$ with $d=\operatorname{rank}(\mathbf{Q}_k)=n-k$. Then
\begin{equation*}
\mathbb{E}|\mathbf{Q}_k\boldsymbol{\epsilon}|_2^2 = d\nu^2 = \left(1-\frac{k}{n}\right) \mathbb{E}|\boldsymbol{\epsilon}|_2^2 .
\end{equation*}
Furthermore, for any $\delta\in(0,1)$, with probability at least $1-\delta$,
\begin{equation*}
|\mathbf{Q}_k\boldsymbol{\epsilon}|_2^2 \leq \nu^2 \left[ d + 2\sqrt{d\log(1/\delta)} + 2\log(1/\delta) \right].
\end{equation*}
For the full projected residual, for any $\eta>0$,
\begin{equation*}
|\mathbf{Q}_k\mathbf{r}|_2^2 \leq (1+\eta)|\mathbf{Q}_k\mathbf{s}|_2^2 + (1+\eta^{-1})|\mathbf{Q}_k\boldsymbol{\epsilon}|_2^2 .
\end{equation*}
If $0\leq k<n$ and $\mathbf{s}\neq\mathbf{0}$, define $\operatorname{SNR}(\mathbf{r}) = \frac{|\mathbf{s}|_2^2}{n\nu^2}$ and $\operatorname{SNR}(\mathbf{Q}_k\mathbf{r}) = \frac{|\mathbf{Q}_k\mathbf{s}|_2^2}{(n-k)\nu^2}$. Let $\rho = \frac{|\mathbf{P}_k\mathbf{s}|_2^2}{|\mathbf{s}|_2^2}$ be the fraction of signal energy removed by the projection. Then
\begin{equation*}
\frac{\operatorname{SNR}(\mathbf{Q}_k\mathbf{r})} {\operatorname{SNR}(\mathbf{r})} = \frac{1-\rho}{1-k/n}.
\end{equation*}
Thus, the SNR improves if and only if $\rho < \frac{k}{n}$.
\end{proposition}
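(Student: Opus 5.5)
The plan is to treat the four claims in Proposition~\ref{prop:noise_snr} separately. All of them rest on two facts: $\mathbf{Q}_k$ is a fixed symmetric idempotent of rank $d=n-k$, and $\boldsymbol{\epsilon}$ is rotationally invariant.

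\emph{Expectation.} Since $\mathbf{Q}_k^\top\mathbf{Q}_k=\mathbf{Q}_k$, we have $\|\mathbf{Q}_k\boldsymbol{\epsilon}\|_2^2=\boldsymbol{\epsilon}^\top\mathbf{Q}_k\boldsymbol{\epsilon}$. Taking expectations with the trace trick gives $\operatorname{tr}(\mathbf{Q}_k\,\nu^2\mathbf{I}_n)=\nu^2\operatorname{tr}(\mathbf{Q}_k)=d\nu^2$, because the trace of a projector equals its rank. Combining this with $\mathbb{E}\|\boldsymbol{\epsilon}\|_2^2=n\nu^2$ yields the factor $1-k/n$.

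\emph{Tail bound.} Write $\mathbf{Q}_k=\mathbf{W}\mathbf{W}^\top$, where $\mathbf{W}\in\mathbb{R}^{n\times d}$ has orthonormal columns. Independence of $\mathbf{P}_k$ from the noise is essential here. Because $\mathbf{W}$ is fixed, $\mathbf{g}=\mathbf{W}^\top\boldsymbol{\epsilon}/\nu\sim\mathcal{N}(\mathbf{0},\mathbf{I}_d)$. Hence $\|\mathbf{Q}_k\boldsymbol{\epsilon}\|_2^2=\nu^2\|\mathbf{g}\|_2^2$ with $\|\mathbf{g}\|_2^2\sim\chi^2_d$. The Laurent--Massart inequality $\Pr\bigl(\chi^2_d\ge d+2\sqrt{dx}+2x\bigr)\le e^{-x}$ with $x=\log(1/\delta)$ gives the stated bound. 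I treat this inequality as a standard cited result rather than reprove it. This is the only step that is more than linear algebra, so it is the main obstacle. If a self-contained argument were demanded, I would instead bound the moment generating function $\mathbb{E}e^{\lambda(\chi^2_d-d)}=e^{-\lambda d}(1-2\lambda)^{-d/2}$ and use $-\log(1-u)-u\le u^2/(2(1-u))$. That yields a sub-gamma bound, and optimizing over $\lambda$ recovers exactly $2\sqrt{dx}+2x$.

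\emph{Full residual.} Linearity gives $\mathbf{Q}_k\mathbf{r}=\mathbf{Q}_k\mathbf{s}+\mathbf{Q}_k\boldsymbol{\epsilon}$. Expanding the square, I bound the cross term via Young's inequality, $2\langle\mathbf{a},\mathbf{b}\rangle\le\eta\|\mathbf{a}\|_2^2+\eta^{-1}\|\mathbf{b}\|_2^2$. This holds deterministically for every $\eta>0$.

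\emph{SNR ratio.} By Proposition~\ref{prop:ortho} applied to $\mathbf{s}$, we have $\|\mathbf{Q}_k\mathbf{s}\|_2^2=\|\mathbf{s}\|_2^2-\|\mathbf{P}_k\mathbf{s}\|_2^2=(1-\rho)\|\mathbf{s}\|_2^2$. Dividing the two SNR definitions then gives
\[
\frac{(1-\rho)\|\mathbf{s}\|_2^2/((n-k)\nu^2)}{\|\mathbf{s}\|_2^2/(n\nu^2)}=\frac{1-\rho}{1-k/n}.
\]
This is well defined since $k<n$ and $\mathbf{s}\neq\mathbf{0}$. Because $1-k/n>0$, the ratio exceeds $1$ if and only if $1-\rho>1-k/n$, that is, if and only if $\rho<k/n$.
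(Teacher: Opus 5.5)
Your proposal is correct and follows essentially the same route as the paper's proof. Both write $\|\mathbf{Q}_k\boldsymbol{\epsilon}\|_2^2=\nu^2\chi^2_d$ via an orthonormal basis of the range of $\mathbf{Q}_k$, apply the Laurent--Massart tail with $x=\log(1/\delta)$, bound the cross term with Young's inequality, and use the Pythagorean split $\|\mathbf{s}\|_2^2=\|\mathbf{P}_k\mathbf{s}\|_2^2+\|\mathbf{Q}_k\mathbf{s}\|_2^2$ for the SNR ratio. The differences are cosmetic: you use the trace trick for the expectation, an $n\times d$ isometry $\mathbf{W}$ instead of a full rotation $\mathbf{R}$, and you cite Proposition~\ref{prop:ortho} rather than re-deriving the Pythagorean identity; your optional MGF/sub-gamma sketch also correctly recovers the cited tail bound.
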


Proposition~\ref{prop:noise_snr} states the condition under which projection improves the effective SNR. SRP always reduces the expected energy of isotropic noise under the fixed-subspace assumption, but it improves SNR only when the removed signal fraction is smaller than the removed noise fraction. This condition is important because an overly aggressive projection may also remove useful residual signal.

The optimization effect of SRP can also be described by a simple one-step identity. Let $\mathbf{r} = \mathbf{P}_k\mathbf{r} + \mathbf{Q}_k\mathbf{r} = \mathbf{p}+\mathbf{q}$. Assume that the learner prediction vector $\mathbf{h}$ is normalized, $|\mathbf{h}|_2=1$, and lies in the projected subspace, i.e., $\mathbf{h}\in\operatorname{Range}(\mathbf{Q}_k)$. With the optimal scalar step size $\eta^\star=\langle \mathbf{q},\mathbf{h}\rangle$, we have
\begin{equation*}
\begin{aligned}
|\mathbf{r}-\eta^\star\mathbf{h}|_2^2 &= |\mathbf{p}|_2^2 + |\mathbf{q}|_2^2 - \langle \mathbf{q},\mathbf{h}\rangle^2  \\
&= |\mathbf{P}_k\mathbf{r}|_2^2 + (1-\gamma^2)|\mathbf{Q}_k\mathbf{r}|_2^2,
\end{aligned}
\end{equation*}
where $\gamma = \frac{\langle \mathbf{Q}_k\mathbf{r},\mathbf{h}\rangle} {|\mathbf{Q}_k\mathbf{r}|_2}$.

This identity shows that SRP focuses the new update on the orthogonal innovation component $\mathbf{Q}_k\mathbf{r}$. It does not imply that the full residual energy always decreases faster than standard gradient boosting.

\begin{algorithm}[tb]
\caption{SCBoost Algorithm}
\label{alg:scboost}
\begin{algorithmic}[1]
\STATE {\bfseries Input:} Dataset $S={(x_i,y_i)}*{i=1}^{n}$, iterations $T$, learning rate $\eta$, projection threshold $\alpha$, covariance coefficient $\lambda*{\mathrm{cov}}$
\STATE \textbf{Data Split:} Partition $S$ into training set $\mathcal{D}*{train}=(\mathbf{X}*{tr},\mathbf{y}*{tr})$ and internal validation set $\mathcal{D}*{val}=(\mathbf{X}*{val},\mathbf{y}*{val})$.
\STATE Initialize logits $\mathbf{F}^{(0)}=\mathbf{0}$ on $\mathcal{D}*{train}$.
\STATE Initialize prediction history $\mathbf{H}^{(0)}\leftarrow []$.
\FOR{$t=1$ {\bfseries to} $T$}
\STATE Compute residual on $\mathcal{D}*{train}$: $\mathbf{r}^{(t)} = \mathbf{y}*{tr} - \sigma(\mathbf{F}^{(t-1)})$.
\STATE // \textit{Spectral Residual Projection (SRP)}
\IF{$t>1$}
\STATE Perform SVD: $\mathbf{H}^{(t-1)} = \mathbf{U}\mathbf{\Sigma}\mathbf{V}^{\top}$.
\STATE Select top-$k$ components $\mathbf{U}*k$ by Eq.~\eqref{eq:energy_threshold}.
\STATE Project residual: $\tilde{\mathbf{r}}^{(t)} = \mathbf{r}^{(t)} - \mathbf{U}*k(\mathbf{U}*k^\top\mathbf{r}^{(t)})$.
\ELSE
\STATE $\tilde{\mathbf{r}}^{(t)}=\mathbf{r}^{(t)}$.
\ENDIF
\STATE Train weak learner $h^{(t)}$ on $(\mathbf{X}*{tr},\tilde{\mathbf{r}}^{(t)})$.
\STATE Update logits: $\mathbf{F}^{(t)} = \mathbf{F}^{(t-1)} + \eta h^{(t)}(\mathbf{X}*{tr})$.
\STATE Update prediction history: $\mathbf{H}^{(t)} \leftarrow [\mathbf{H}^{(t-1)},h^{(t)}(\mathbf{X}*{tr})]$.
\ENDFOR
\STATE // \textit{Covariance-Regularized Weighting (CRW)}
\STATE Construct validation prediction matrix $\mathbf{H}*{val}$ with $\mathbf{H}*{val}[i,j]=h^{(j)}(x_i)$ for $x_i\in\mathbf{X}*{val}$.
\STATE Compute covariance matrix $\mathbf{C}=\frac{1}{m}\bar{\mathbf{H}}*{val}^{\top}\bar{\mathbf{H}}*{val}$.
\STATE Solve
$\min_{\mathbf{w}\in\Delta_T} \mathcal{L} \left( \mathbf{y}*{val}, \sigma(\mathbf{H}*{val}\mathbf{w}) \right) + \lambda_{\mathrm{cov}}\mathbf{w}^{\top}\mathbf{C}\mathbf{w}$.
\STATE {\bfseries Output:} Final predictor
$F(x)=\sum_{t=1}^{T}w_t h^{(t)}(x)$.
\end{algorithmic}
\end{algorithm}

\subsection{Covariance-Regularized Weighting}
\label{SCBoost: crw}

\textbf{Algorithm Description.}
SRP orthogonalizes the training target before learner induction, but the fitted learners may still be correlated because of limited learner capacity and finite-sample effects. CRW addresses this issue at the aggregation stage. Let $\mathbf{H}*{val}\in\mathbb{R}^{m\times T}$ be the validation prediction matrix, where $\mathbf{H}*{val}[i,j]=h^{(j)}(x_i), \quad x_i\in\mathbf{X}*{val}$. Let $\bar{\mathbf{H}}*{val}$ be the column-centered version of $\mathbf{H}*{val}$, and define $\mathbf{C} = \frac{1}{m}\bar{\mathbf{H}}*{val}^{\top}\bar{\mathbf{H}}*{val}$. The final ensemble uses weights $\mathbf{w}$ on the probability simplex $\Delta_T = \left(\mathbf{w}\in\mathbb{R}*+^T: \sum_{t=1}^T w_t=1 \right)$. We solve
\begin{equation}
\label{eq:weight}
\min_{\mathbf{w}\in\Delta_T} \quad \mathcal{L} \left( \mathbf{y}*{val}, \sigma(\mathbf{H}*{val}\mathbf{w}) \right) + \lambda_{\mathrm{cov}}\mathbf{w}^{\top}\mathbf{C}\mathbf{w},
\end{equation}
where $\mathcal{L}(\cdot,\cdot)$ is the logistic loss and $\lambda_{\mathrm{cov}}\geq0$ controls the strength of covariance regularization. The term $\mathbf{w}^{\top}\mathbf{C}\mathbf{w}$ penalizes covariance-weighted aggregate prediction variance on the validation set. This encourages the final ensemble to avoid placing large weights on mutually redundant predictors when doing so does not improve validation loss.

\textbf{Theoretical Motivation.}
We do not use a standalone Rademacher-complexity theorem to justify Eq.~\eqref{eq:weight}. While recent literature has deeply analyzed the high-dimensional risk and implicit bias of boosting under squared loss ($\ell_2$-boosting) \cite{su2026does}, we use the standard ambiguity decomposition as a motivation for covariance-aware aggregation. For squared loss and convex weights, define $F(x)=\sum_{t=1}^{T}w_t h^{(t)}(x)$. Then the following identity holds for each sample $(x,y)$:
\begin{equation*}
\begin{aligned}
(F(x)-y)^2 = &\sum_{t=1}^{T}w_t(h^{(t)}(x)-y)^2 \\
&- \sum_{t=1}^{T}w_t(h^{(t)}(x)-F(x))^2 .
\end{aligned}
\end{equation*}
The second term is the ambiguity term. For a fixed weighted average individual error, increasing ambiguity reduces the squared error of the ensemble. Since strongly correlated learners tend to have smaller ambiguity, the covariance penalty in Eq.~\eqref{eq:weight} is consistent with this decomposition. This argument is a motivation for CRW under squared loss; it is not a generalization guarantee for the logistic-loss objective.

\subsection{Algorithm Summary}
SCBoost differs from diversity-promoting methods such as NCL in where the diversity constraint is introduced. SCBoost modifies the target used to train each new learner, while CRW adjusts the final aggregation weights. The complete procedure is summarized in Algorithm~\ref{alg:scboost}.
\begin{itemize}
    \item \textbf{Target vs. Output:} SRP orthogonalizes the learning target on the training sample. This is different from directly penalizing correlations among model outputs during learner training.
    \item \textbf{Pre-emptive vs. Corrective:} SRP removes the selected historical component from the residual before fitting the next learner. CRW then handles remaining correlations after the learners have been trained.
    \item \textbf{Geometric Interpretation:} SRP decomposes the residual into a historical component and an orthogonal innovation component. The next learner is trained on the innovation component.
\end{itemize}

\section{Experiments and Results}
In this section, we report the experimental results regarding our proposed SCBoost algorithm in modeling real-world binary classification task in various domains. The primary objective is to evaluate its prediction performance, ensemble diversity, robustness under different noisy ratio settings, hyperparameter sensitivity analysis and ablation studies to quantify the contribution of each key component of our method.

\subsection{Setup}
\textbf{Dataset.} We used five different datasets from OpenML repository, which are all publicly available. These datasets include Madelon (2,600 samples, 500 features) \cite{guyon2007competitive}, Jasmine (2,984 samples, 144 features), Bioresponse (3,751 samples, 1,776 features), Creditcard (284,807 samples, 30 features) \cite{dalpozzolo2015calibrating}, OVA\_Uterus (1,545 samples, 10,936 features) \cite{stiglic2010stability}, QSAR (1,055 samples, 41 features), Scene (2,407 samples, 300 features), Letter (20,000 samples, 42 features), Pol (10,082 samples, 27 features) and the Elevators (16,599 samples, 17 features). We employed random undersampling \cite{hasanin2018effects, liu2020dealing} to address class imbalance in the Creditcard and OVA\_Uterus datasets.

\textbf{Baselines.} We selected a comprehensive set of seven baseline algorithms (RF \cite{breiman2001random}, ADA \cite{freund1997decision}, GBDT \cite{friedman2001greedy}, XGB \cite{chen2016xgboost}, LGBM \cite{ke2017lightgbm}, CAT \cite{prokhorenkova2018catboost}, and NGB \cite{duan2020ngboost}) that represent the major paradigms in EL and the current SOTA in gradient boosting. This allows for a thorough comparison, situating SCBoost’s performance within the broader landscape of ensemble methods. The specific parameter configurations for each baseline and SCBoost are provided in Appendix \ref{sec:appendix_parac}.

\begin{table}
  \caption{Out-of-the-box performance for SCBoost compared with the SOAT ensemble algorithms.}
  \label{tab:scboost_comparison}
  \begin{center}
    \begin{small}
    \setlength{\tabcolsep}{3pt}
    \renewcommand{\arraystretch}{0.85}
      \begin{sc}
        \begin{tabular}{llllllllll}
          \toprule
          \textbf{Dataset} & \textbf{Metric} & \textbf{RF} & \textbf{ADA} & \textbf{GBDT} & \textbf{XGB} & \textbf{LGBM} & \textbf{CAT} & \textbf{NGB} & \textbf{SCBoost} \\
          \midrule
          \multirow{3}{*}{Madelon} 
          & ACC & 0.7150$^{**}$ & 0.6112$^{**}$ & 0.7327$^{**}$ & 0.8212$^{**}$ & 0.8139$^{**}$ & \underline{0.8465}$^*$ & 0.7285$^{**}$ & \textbf{0.9396} \\
          & F1  & 0.7210$^{**}$ & 0.6118$^{**}$ & 0.7410$^{**}$ & 0.8233$^{**}$ & 0.8330$^{**}$ & \underline{0.8462}$^*$ & 0.7349$^{**}$ & \textbf{0.9385} \\
          & AUC & 0.7892$^{**}$ & 0.6525$^{**}$ & 0.8206$^{**}$ & 0.8917$^*$ & 0.9002$^*$ & \underline{0.9148}$^*$ & 0.8084$^{**}$ & \textbf{0.9649} \\
          \midrule
          \multirow{3}{*}{Jasmine} 
          & ACC & \underline{0.8150}$^{**}$ & 0.7929$^{**}$ & 0.8083$^{**}$ & 0.8026$^{**}$ & 0.8046$^{**}$ & 0.8053$^{**}$ & 0.8006$^{**}$ & \textbf{0.9381} \\
          & F1  & \underline{0.8363}$^*$ & 0.8115$^{**}$ & 0.8274$^{**}$ & 0.8179$^{**}$ & 0.8207$^{**}$ & 0.8248$^*$ & 0.8287$^{**}$ & \textbf{0.9391} \\
          & AUC & \underline{0.8809}$^*$ & 0.8434$^{**}$ & 0.8632$^*$ & 0.8608$^*$ & 0.8640$^*$ & 0.8741$^*$ & 0.8538$^{**}$ & \textbf{0.9590} \\
          \midrule
          \multirow{3}{*}{Bioresponse} 
          & ACC & \underline{0.8049}$^{**}$ & 0.7563$^{**}$ & 0.7870$^{**}$ & 0.8009$^{**}$ & 0.7985$^{**}$ & 0.7897$^{**}$ & 0.7649$^{**}$ & \textbf{0.9323} \\
          & F1  & \underline{0.8212}$^{**}$ & 0.7770$^{**}$ & 0.8078$^{**}$ & 0.8179$^{**}$ & 0.8174$^{**}$ & 0.8093$^{**}$ & 0.7903$^{**}$ & \textbf{0.9377} \\
          & AUC & \underline{0.8774}$^*$ & 0.8240$^{**}$ & 0.8543$^*$ & 0.8720$^*$ & 0.8735$^*$ & 0.8633$^*$ & 0.8320$^{**}$ & \textbf{0.9576} \\
          \midrule
          \multirow{3}{*}{Creditcard} 
          & ACC & 0.9685 & 0.9645$^*$ & 0.9614$^*$ & \underline{0.9736} & 0.9716$^*$ & 0.9675$^*$ & 0.9400$^*$ & \textbf{0.9797} \\
          & F1  & 0.9677 & 0.9643$^*$ & 0.9608$^*$ & \underline{0.9731} & 0.9711$^*$ & 0.9667$^*$ & 0.9377$^*$ & \textbf{0.9797} \\
          & AUC & 0.9923 & 0.9874$^*$ & 0.9911$^*$ & \underline{0.9933} & 0.9916 & \textbf{0.9934} & 0.9857$^*$ & 0.9896 \\
          \midrule
          \multirow{3}{*}{OVA\_Uterus} 
          & ACC & 0.9075 & 0.9078 & 0.9037$^*$ & 0.8913$^*$ & \underline{0.9358} & 0.9117 & 0.8913\#$^*$ & \textbf{0.9517} \\
          & F1  & 0.9035 & 0.9054 & 0.9039$^*$ & 0.8918$^*$ & \underline{0.9349} & 0.9110$^*$ & 0.8939\#$^*$ & \textbf{0.9544} \\
          & AUC & 0.9632 & 0.9647 & 0.9629 & 0.9644 & \textbf{0.9749} & \underline{0.9703} & 0.9578\# & \textbf{0.9749} \\
          \midrule
          \multirow{3}{*}{QSAR} 
          & ACC & \textbf{0.9341} & 0.8835$^{**}$ & \underline{0.9327} & 0.9256 & 0.9313 & 0.9243 & 0.8878$^{**}$ & 0.9326 \\
          & F1  & \textbf{0.9340} & 0.8856$^{*}$ & \underline{0.9336} & 0.9253 & 0.9314 & 0.9255 & 0.8893$^{**}$ & 0.9314 \\
          & AUC & \textbf{0.9807} & 0.9472$^*$ & 0.9702 & 0.9727 & 0.9752 & 0.9721$^*$ & 0.9482$^*$ & \underline{0.9796} \\
          \midrule
          \multirow{3}{*}{Scene} 
          & ACC & 0.9443$^{**}$ & 0.9606$^*$ & \underline{0.9803} & 0.9710 & 0.9768 & 0.9640$^*$ & 0.9640$^*$ & \textbf{0.9803} \\
          & F1  & 0.9441$^{**}$ & 0.9612$^*$ & \underline{0.9802} & 0.9712 & 0.9768 & 0.9644$^*$ & 0.9647$^*$ & \textbf{0.9803} \\
          & AUC & 0.9869 & 0.9949 & 0.9955 & 0.9946 & \textbf{0.9965} & \underline{0.9956} & 0.9898 & 0.9938 \\
          \midrule
          \multirow{3}{*}{Letter} 
          & ACC & 0.9963$^{**}$ & 0.9843$^{**}$ & 0.9928$^{**}$ & 0.9982 & \underline{0.9983} & 0.9970$^*$ & 0.9814$^{**}$ & \textbf{0.9989} \\
          & F1  & 0.9522$^{**}$ & 0.7837$^{**}$ & 0.9040$^{**}$ & 0.9775 & \underline{0.9788} & 0.9614$^*$ & 0.7032$^{**}$ & \textbf{0.9864} \\
          & AUC & 0.9998 & 0.9912$^*$ & 0.9985 & \underline{0.9999} & \textbf{0.9999} & 0.9997 & 0.9720$^{**}$ & 0.9949 \\
          \midrule
          \multirow{3}{*}{Pol} 
          & ACC & 0.9847$^{**}$ & 0.9388$^{**}$ & 0.9687$^{**}$ & 0.9888$^{**}$ & \underline{0.9896}$^*$ & 0.9876$^{**}$ & 0.9504$^{**}$ & \textbf{0.9965} \\
          & F1  & 0.9886$^{**}$ & 0.9539$^{**}$ & 0.9766$^{**}$ & 0.9916$^{**}$ & \underline{0.9922}$^*$ & 0.9907$^{**}$ & 0.9633$^{**}$ & \textbf{0.9974} \\
          & AUC & 0.9990 & 0.9849$^{**}$ & 0.9963$^*$ & \textbf{0.9994} & 0.9994 & \underline{0.9994} & 0.9859$^{**}$ & 0.9975 \\
          \midrule
          \multirow{3}{*}{Elevators} 
          & ACC & 0.8625$^{**}$ & 0.8352$^{**}$ & 0.8596$^{**}$ & \underline{0.8921}$^{**}$ & 0.8907$^{**}$ & 0.8882$^{**}$ & 0.8131$^{**}$ & \textbf{0.9678} \\
          & F1  & 0.9055$^{**}$ & 0.8884$^{**}$ & 0.9046$^{**}$ & \underline{0.9246}$^{**}$ & 0.9244$^{**}$ & 0.9230$^{**}$ & 0.8764$^{**}$ & \textbf{0.9765} \\
          & AUC & 0.9132$^*$ & 0.8852$^{**}$ & 0.9083$^*$ & \underline{0.9418}$^*$ & 0.9391$^*$ & 0.9378$^*$ & 0.8503$^{**}$ & \textbf{0.9735} \\
          \bottomrule
        \end{tabular}
      \end{sc}
    \end{small}
  \end{center}
  \vspace{2mm}
  \footnotesize
  \textbf{Note:} For the OVA\_Uterus dataset, the NGB results (\#) were obtained with \texttt{natural\_gradient=False} due to numerical instability in the official setting. All baselines marked with $*$ are significantly different from SCBoost according to the Wilcoxon signed-rank test, where $^*$ denotes $p<0.05$ and $^{**}$ denotes $p<0.01$.
\end{table}

\textbf{Metrics and Protocol.} Our experimental evaluation was designed to be comprehensive, assessing both the prediction performance and the underlying ensemble diversity of SCBoost against SOTA baselines. To rigorously measure performance, we employed a suite of standard classification metrics, including overall accuracy (ACC), the F1 score (F1) and Area Under the ROC Curve (AUC). To evaluate ensemble diversity, we employed three well-established diversity measures: Q-statistic (Q-s), Disagreement (Dis), and Ambiguity (Amb). To avoid data leakage and ensure reliable evaluation, we used Stratified ten-fold cross-validation across all experiments with strict isolation of preprocessing. Crucially, for SCBoost, we employed a nested validation strategy to optimize the CRW weights: within each training fold, we performed an internal 80/20 split. The 80\% subset was used for SRP projection and base learner training, while the held-out 20\% subset was reserved strictly for covariance estimation and weight optimization. Finally, we reported the averaged results over ten-fold cross-validation. The best results are highlighted in \textbf{bold}, and the second-best results are \underline{underlined}.

\textbf{Environmental Setting.} All experiments were conducted locally using Jupyter Notebook in an Anaconda environment on Windows 10 (Version 10.0.19045). The hardware infrastructure utilized an Intel processor (Intel64 Family 6 Model 151 Stepping 2, GenuineIntel) equipped with 20 logical cores. The software stack was built on Python 3.12.4, utilizing the following key libraries: pandas 2.2.2, numpy 1.26.4, scikit-learn 1.7.2, statsmodels 0.14.2, imbalanced-learn 0.14.0, lightgbm 4.3.0, xgboost 2.0.3, catboost 1.2.3, and NGBoost 0.5.7.

\subsection{Out-of-the-box Performance Comparison}
\label{ssec:overall_performance}

Table \ref{tab:scboost_comparison} reports the out-of-the-box performance of SCBoost and seven ensemble baselines on ten benchmark datasets. These results showed that SCBoost achieved the best or tied-best ACC and F1 on nine datasets, with clear gains on Madelon, Jasmine, Bioresponse, Pol, and Elevators. The AUC results were more mixed: SCBoost performed best or tied for best on Madelon, Jasmine, Bioresponse, OVA\_Uterus, and Elevators, but was not uniformly superior on Creditcard, QSAR, Scene, Letter, and Pol. Wilcoxon signed-rank tests indicated that many improvements were significant, while the differences were smaller or insignificant on datasets where several baselines already performed competitively. These results demonstrated the effectiveness of SCBoost in an out-of-the-box setting, where extensive hyperparameter tuning may be impractical due to limited computational resources.

\begin{table}
  \caption{Performance of SCBoost (default) against tuned SOTA ensembles.}
  \label{tab:performance_tuned}
  \begin{center}
    \begin{small}
    \setlength{\tabcolsep}{3pt}
      \begin{sc}
        \begin{tabular}{llllllllll}
          \toprule
          \textbf{Dataset} & \textbf{Metric} & \textbf{RF} & \textbf{ADA} & \textbf{GBDT} & \textbf{XGB} & \textbf{LGBM} & \textbf{CAT} & \textbf{NGB} & \textbf{SCBoost} \\
          \midrule
          \multirow{3}{*}{Madelon} 
          & ACC & 0.7104$^{**}$ & 0.6292$^{**}$ & 0.8500$^*$ & 0.8469$^*$ & 0.8415$^{**}$ & \underline{0.8681} & 0.7500\#$^{**}$ & \textbf{0.9396} \\
          & F1  & 0.7276$^{**}$ & 0.6265$^{**}$ & 0.8555 & 0.8492$^*$ & 0.8437$^*$ & \underline{0.8672} & 0.7565\#$^{**}$ & \textbf{0.9385} \\
          & AUC & 0.7855$^{**}$ & 0.6850$^{**}$ & 0.9163 & 0.9108 & 0.9088 & \underline{0.9315} & 0.8236\#$^{**}$ & \textbf{0.9649} \\
          \midrule
          \multirow{3}{*}{Jasmine} 
          & ACC & 0.8093$^{**}$ & 0.7999$^{**}$ & 0.8140$^{**}$ & 0.8170$^{**}$ & 0.8150$^*$ & \underline{0.8194}$^*$ & 0.8086\#$^{**}$ & \textbf{0.9381} \\
          & F1  & 0.8365$^*$ & 0.8200$^*$ & 0.8303$^*$ & 0.8381$^*$ & 0.8372$^*$ & \underline{0.8396}$^*$ & 0.8282\#$^*$ & \textbf{0.9391} \\
          & AUC & 0.8722 & 0.8496$^*$ & 0.8691 & 0.8637 & 0.8639$^*$ & \underline{0.8769} & 0.8597\# & \textbf{0.9590} \\
          \midrule
          \multirow{3}{*}{Bioresponse} 
          & ACC & 0.7787$^{**}$ & 0.7638$^{**}$ & 0.8030$^{**}$ & \underline{0.8054}$^{**}$ & 0.8040$^{**}$ & 0.8022$^{**}$ & 0.7926\#$^{**}$ & \textbf{0.9323} \\
          & F1  & 0.7972$^{**}$ & 0.7838$^{**}$ & 0.8210$^{**}$ & \underline{0.8230}$^{**}$ & 0.8213$^{**}$ & 0.8215$^{**}$ & 0.8129\#$^{**}$ & \textbf{0.9377} \\
          & AUC & 0.8516$^*$ & 0.8271$^{**}$ & 0.8727 & \underline{0.8765} & 0.8731 & 0.8710 & 0.8605\# & \textbf{0.9576} \\
          \midrule
          \multirow{3}{*}{Creditcard} 
          & ACC & 0.9593 & 0.9685 & 0.9705 & 0.9736 & \underline{0.9787} & 0.9726 & 0.9583\# & \textbf{0.9797} \\
          & F1  & 0.9579 & 0.9682 & 0.9699 & 0.9731 & \underline{0.9783} & 0.9721 & 0.9582\# & \textbf{0.9797} \\
          & AUC & 0.9905 & 0.9876 & 0.9910 & \underline{0.9935} & \textbf{0.9943} & 0.9915 & 0.9909\# & 0.9896 \\
          \midrule
          \multirow{3}{*}{OVA\_Uterus} 
          & ACC & 0.9318 & 0.9278 & 0.9318 & 0.9195 & 0.9318 & \underline{0.9358} & 0.9195\# & \textbf{0.9517} \\
          & F1  & 0.9325 & 0.9278 & 0.9316 & 0.9196 & 0.9309 & \underline{0.9362} & 0.9199\# & \textbf{0.9544} \\
          & AUC & 0.9672 & \underline{0.9731} & 0.9703 & 0.9649 & 0.9718 & 0.9691$^*$ & 0.9587\#$^{**}$ & \textbf{0.9749} \\
          \bottomrule
        \end{tabular}
      \end{sc}
    \end{small}
  \end{center}
  \vskip -0.1in
  \vspace{2mm}
  \footnotesize
  \textbf{Note:} The NGB results (\#) were obtained with \texttt{natural\_gradient=False} due to numerical instability in the official setting. See Appendix~\ref{sec:appendix_parac} for details. All baselines marked with $*$ are significantly different from SCBoost according to the Wilcoxon signed-rank test, where $^*$ denotes $p<0.05$ and $^{**}$ denotes $p<0.01$.
\end{table}

\begin{table}
  \caption{Diversity metrics of ensemble methods across benchmark datasets.}
  \label{tab:diversity}
  \begin{center}
    \begin{small}
    \setlength{\tabcolsep}{1.2pt}
      \begin{sc}
        \begin{tabular}{llllllllll}
          \toprule
          \textbf{Dataset} & \textbf{Metric} & \textbf{RF} & \textbf{ADA} & \textbf{GBDT} & \textbf{XGB} & \textbf{LGBM} & \textbf{CAT} & \textbf{NGB} & \textbf{SCBoost} \\
          \midrule
          \multirow{3}{*}{\textbf{Madelon}} 
          & Dis ($\uparrow$) & 0.4734$^{***}$ & \underline{0.4962}$^{***}$ & 0.0869\# & 0.2047$^{***}$ & 0.2047$^{***}$ & 0.397$^{***}$ & 0.0483$^{***}$ & \textbf{0.9900} \\
          & Q-s ($\downarrow$) & 0.0858 & \textbf{-0.0006} & \underline{-0.3039}\# & 0.8669 & 0.8669 & 0.9851 & 0.9829 & 0.0000 \\
          & Amb ($\uparrow$) & \underline{0.1572}$^{***}$ & 0.0625$^{***}$ & 0.0086 & 0.0368$^{***}$ & 0.0368$^{***}$ & 0.0030$^{***}$ & 0.0023$^{***}$ & \textbf{0.5004} \\
          \midrule
          \multirow{3}{*}{\textbf{Jasmine}} 
          & Dis ($\uparrow$) & 0.2871$^{***}$ & \underline{0.4593}$^{***}$ & 0.1105 & 0.0913$^{***}$ & 0.0913$^{***}$ & 0.0335$^{***}$ & 0.0212$^{***}$ & \textbf{0.9900} \\
          & Q-s ($\downarrow$) & 0.7054 & 0.1761 & \textbf{-0.0886}\# & 0.9687 & 0.9687 & 0.9875 & 0.9890 & \underline{0.0000} \\
          & Amb ($\uparrow$) & 0.0898$^{***}$ & \underline{0.1970}$^{***}$ & 0.0400 & 0.0094$^{***}$ & 0.0094$^{***}$ & 0.0027$^{***}$ & 0.0019$^{***}$ & \textbf{0.5000} \\
          \midrule
          \multirow{3}{*}{\textbf{Bioresponse}} 
          & Dis ($\uparrow$) & 0.3418$^{***}$ & \underline{0.4974}$^{***}$ & 0.0705\# & 0.0757$^{***}$ & 0.0757$^{***}$ & 0.0467$^{***}$ & 0.0617$^{***}$ & \textbf{0.9900} \\
          & Q-s ($\downarrow$) & 0.5505 & 0.0553 & \textbf{-0.2129}\# & 0.9768 & 0.9768 & 0.9832 & 0.8929 & \underline{0.0000} \\
          & Amb ($\uparrow$) & 0.1205$^{***}$ & \underline{0.1720}$^{***}$ & 0.0080 & 0.0089$^{***}$ & 0.0089$^{***}$ & 0.0005$^{***}$ & 0.0142$^{***}$ & \textbf{0.4577} \\
          \bottomrule
        \end{tabular}
      \end{sc}
    \end{small}
  \end{center}
  \vspace{2mm}
  \footnotesize
  \textbf{Note:} If SCBoost and some baselines yield identical fold-level values, resulting in zero paired differences. In such cases, the Wilcoxon signed-rank test is not applicable (\#). All baselines marked with $*$ are significantly different from SCBoost according to the Wilcoxon signed-rank test, where $^*$ denotes $p<0.05$, $^{**}$ denotes $p<0.01$, and $^{***}$ denotes $p<0.001$. 
\end{table}

\subsection{Comparison with Tuned SOTA Baselines}
\label{ssec:appendix_tuned_performance}

Table \ref{tab:performance_tuned} presents the comparison between SCBoost with fixed default parameters and seven tuned ensemble baselines on five benchmark datasets. These results reported that SCBoost achieved the best ACC and F1 on all five datasets, and obtained the best AUC on four of them. The gains were clear on Madelon, Jasmine, and Bioresponse, where SCBoost improved over the strongest tuned baseline by a large margin in ACC and F1. On Creditcard, SCBoost achieved the best ACC and F1, while its AUC was lower than that of LGBM and XGB. On OVA\_Uterus, SCBoost obtained the best ACC, F1, and AUC, although the differences in AUC were small. Although tuning improved most baselines, SCBoost still achieved the strongest overall ACC and F1 without dataset-specific tuning, suggesting that its gains were not merely replaceable by hyperparameter optimization.

\subsection{Ensemble Diversity Analysis}
\label{ssec:diversity_analysis}
Table \ref{tab:diversity} summarizes the diversity metrics on three benchmark datasets. These results exhibited taht SCBoost obtained higher Dis and Amb scores than the baselines, while its Q-s values were close to zero. These results were consistent with the empirical effect of SRP: the residual target was projected onto the orthogonal complement of the selected historical prediction subspace before training the next learner, which reduced components already represented by previous learners. This does not guarantee exact orthogonality of the fitted learners, since the base learner may only approximate the projected target. The higher Amb scores were also consistent with the role of CRW, which penalized validation-set covariance during weight optimization. Beyond statistical metrics, SCBoost exhibited a significantly flatter singular value spectrum in its prediction history compared to GBDT in Appendix \ref{sec:redun_removal}. This slower spectral decay provided direct geometric evidence that SRP effectively forces learners to span a broader, less redundant prediction subspace. Overall, this study indicates that SRP and CRW reduced empirical learner redundancy under the reported diversity metrics. 

\begin{figure}
  \begin{center}
    \centerline{\includegraphics[width=\columnwidth]{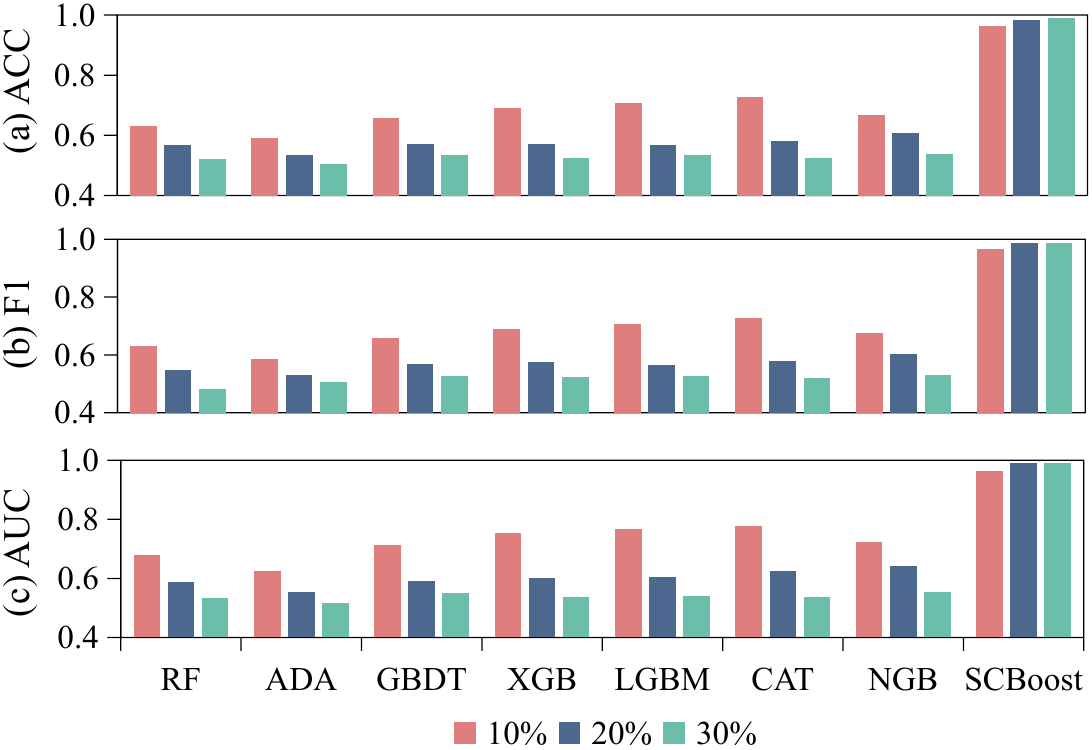}}
    \caption{Robustness evaluation for SCBoost and baselines against increasing levels (10\%-30\%) of label noise based on Madelon dataset.}
    \label{fig:2}
  \end{center}
\end{figure}

\begin{figure}
  \begin{center}
    \centerline{\includegraphics[width=\columnwidth]{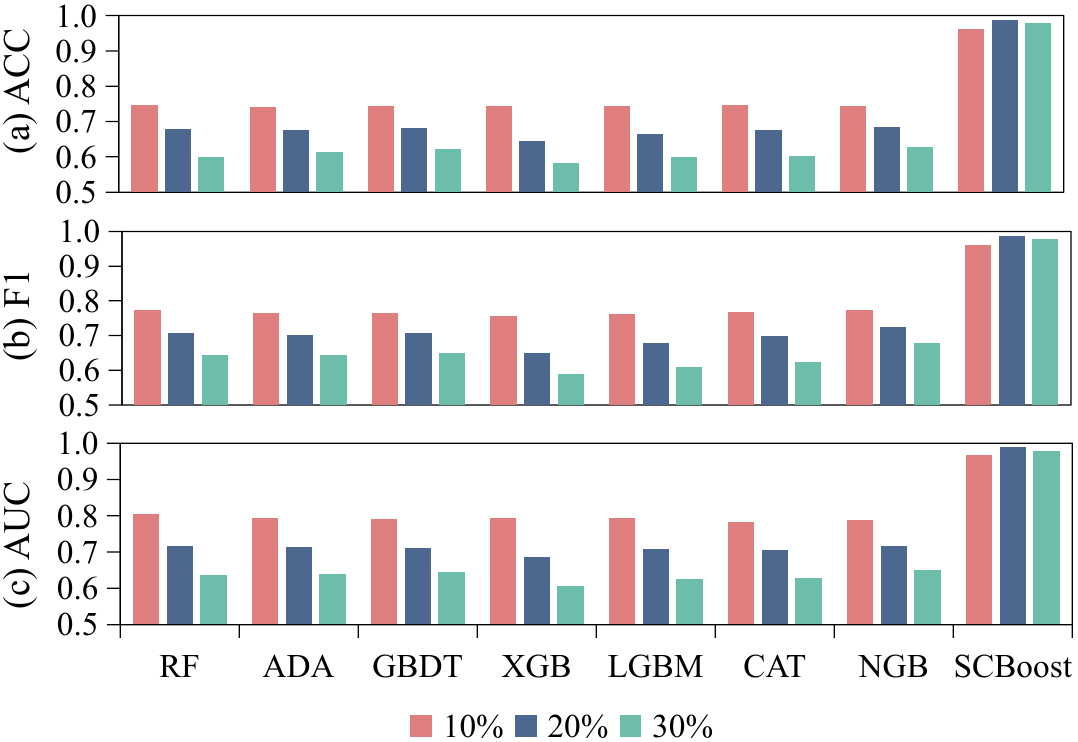}}
    \caption{Robustness evaluation for SCBoost and baselines against increasing levels (10\%-30\%) of label noise based on Jasmine dataset.}
    \label{fig:3}
  \end{center}
\end{figure}

\subsection{Robustness Analysis}
\label{ssec:robustness_analysis}
Figure \ref{fig:2} and Figure \ref{fig:3} reports the robustness results under 10\%-30\% label-flip noise on Madelon and Jasmine. These results showed that SCBoost maintained competitive performance as the noise ratio increased, while several baselines showed larger degradation. These results were consistent with the motivation of SRP: instead of fitting the full residual $\mathbf{r}^{(t)}$, SCBoost trained each new learner on the projected target $\tilde{\mathbf{r}}^{(t)}$, which removed the component lying in the selected historical prediction subspace. The fixed-subspace analysis in Proposition~\ref{prop:noise_snr} showed that such a projection reduced the expected energy of isotropic noise under explicit independence assumptions, and improved SNR only when the removed signal fraction was smaller than the removed noise fraction. Since the robustness experiments used label-flip noise rather than additive Gaussian noise, these results should be interpreted as empirical evidence of noise robustness, not as a direct validation of the fixed-subspace SNR analysis.

\begin{figure*}
    \centering
    \includegraphics[width=\textwidth]{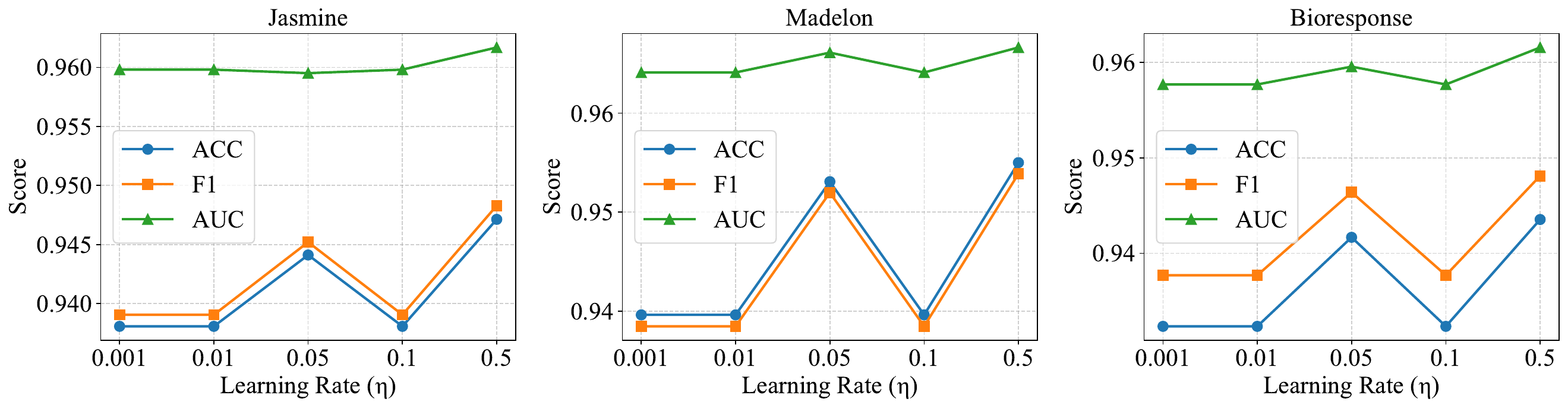}
    \caption{Learning rate sensitivity analysis of SCBoost on Jasmine, Madelon, and Bioresponse. We evaluated $\eta \in {0.001, 0.01, 0.05, 0.1, 0.5}$ and reported ACC, F1, and AUC for each dataset.}
    \label{fig:lr_sensitivity}
\end{figure*}

\begin{table}
  \caption{Ablation of SCBoost algorithm based on three datasets.}
  \label{tab:ablation_resized}
  \begin{center}
    \begin{small}
    \setlength{\tabcolsep}{3pt}
      \begin{sc}
        \begin{tabular}{ccccccc}
          \toprule
          \textbf{Dataset} & \textbf{Metric} & \textbf{SRP} & \textbf{CRW} & \textbf{SRP+CRW} \\
          \midrule
          \multirow{3}{*}{\textbf{Madelon}} 
          & ACC & 0.9285 & 0.9158 & \textbf{0.9396} \\
          & F1 & 0.9257 & 0.9152 & \textbf{0.9385} \\
          & AUC & 0.9467 & 0.9640 & \textbf{0.9649} \\
          \midrule
          \multirow{3}{*}{\textbf{Jasmine}} 
          & ACC & 0.9143 & 0.8643 & \textbf{0.9381} \\
          & F1 & 0.9135 & 0.8759 & \textbf{0.9391} \\
          & AUC & 0.9367 & 0.9277 & \textbf{0.9590} \\
          \midrule
          \multirow{3}{*}{\textbf{Bioresponse}} 
          & ACC & 0.8628 & 0.8537 & \textbf{0.9323} \\
          & F1 & 0.8690 & 0.8686 & \textbf{0.9377} \\
          & AUC & 0.9025 & 0.9153 & \textbf{0.9576} \\
          \bottomrule
        \end{tabular}
      \end{sc}
    \end{small}
  \end{center}
\end{table}

\subsection{Hyperparameter Sensitivity Analysis}
\label{ssec:lr_sensitivity}
Figure \ref{fig:lr_sensitivity} reports the sensitivity of SCBoost to the learning rate $\eta$ on Jasmine, Madelon, and Bioresponse. We evaluated $\eta \in {0.001, 0.01, 0.05, 0.1, 0.5}$ using ACC, F1, and AUC. Overall, SCBoost remained stable across different learning rates, especially in terms of AUC. On all three datasets, larger learning rates generally improved ACC and F1, with $\eta=0.5$ giving the best or near-best results. In contrast, $\eta=0.1$ led to a small performance drop on Madelon and Bioresponse, suggesting that the effect of $\eta$ was not strictly monotonic. These results indicated that SCBoost was not highly sensitive to small learning rates and that a moderately large learning rate could improve empirical performance under the default setting. Additionally, Appendix \ref{sec:appendix_ablation} showed that shallow trees underfit orthogonal innovations, while deep trees ($D>5$) overfit without improving the redundancy ratio. Therefore, SCBoost optimally requires moderate-depth learners.

\subsection{Ablation}
\label{ssec:ablation}
Table \ref{tab:ablation_resized} presents the ablation results on Madelon, Jasmine, and Bioresponse. SRP alone outperformed CRW alone, suggesting that modifying the residual target was more influential than only reweighting the trained learners. Combining SRP and CRW achieved the best performance on all three datasets. For example, on Bioresponse, SRP+CRW improved ACC by more than 7\% compared with either component alone. These results were consistent with the roles of the two components: SRP removed the selected historical component from the residual target as described in Proposition~\ref{prop:ortho}, while CRW reduced the influence of highly correlated learners through validation-set covariance regularization. Overall, the ablation results indicated that both target projection and covariance-aware aggregation contributed to the final performance.

\section{Conclusion}
In this work, we re-examined learner redundancy in gradient boosting and proposed a shift from traditional residual fitting to explicit residual orthogonalization. Our proposed framework, SCBoost, achieves this through a dual mechanism: purifying the learning target via Spectral Residual Projection (SRP) and penalizing ensemble correlation via Covariance-Regularized Weighting (CRW). We theoretically characterized SRP's ability to isolate empirical innovations and improve the effective noise level, which was corroborated by SCBoost's strong default performance across multiple benchmark datasets. Although we observed some metric-specific variations that depend on data characteristics, our findings clearly demonstrate the value of geometric target modification in ensemble learning. Future work will focus on designing more computationally scalable projection techniques, enabling this non-redundant boosting paradigm to be deployed on larger-scale industrial applications.

\section*{Acknowledgements}
This work was supported in part by the National Natural Science Foundation of China under Grant 62403043. The authors declare that they have no known competing financial interests or personal relationships that could have appeared to influence the work reported in this paper.

\bibliography{Sample}

\newpage

\appendix

\section{Proof of Proposition \ref{prop:ortho}}
\label{app:sec:a}

\begin{proof}
Since $\mathbf{U}_k^\top\mathbf{U}_k=\mathbf{I}_k$, we have
\begin{equation*}
\begin{aligned}
\mathbf{P}_k^2 &=(\mathbf{U}_k\mathbf{U}_k^\top)(\mathbf{U}_k\mathbf{U}_k^\top) \\
&=\mathbf{U}_k(\mathbf{U}_k^\top\mathbf{U}_k)\mathbf{U}_k^\top \\
&=\mathbf{U}_k\mathbf{I}_k\mathbf{U}_k^\top=\mathbf{P}_k,
\end{aligned}
\end{equation*}
and
\begin{equation*}
\mathbf{P}_k^\top=(\mathbf{U}_k\mathbf{U}_k^\top)^\top=\mathbf{U}_k\mathbf{U}_k^\top=\mathbf{P}_k.
\end{equation*}
Thus $\mathbf{P}_k$ is the orthogonal projector onto $\mathcal{H}_{t-1}=\operatorname{span}(\mathbf{U}_k)$. Moreover,
\begin{equation*}
\begin{aligned}
\mathbf{Q}_k^2 &=(\mathbf{I}_n-\mathbf{P}_k)^2 \\
&=\mathbf{I}_n-2\mathbf{P}_k+\mathbf{P}_k^2 \\
&=\mathbf{I}_n-\mathbf{P}_k=\mathbf{Q}_k,
\end{aligned}
\end{equation*}
and
\begin{equation*}
\mathbf{Q}_k^\top=(\mathbf{I}_n-\mathbf{P}_k)^\top=\mathbf{I}_n-\mathbf{P}_k^\top=\mathbf{I}_n-\mathbf{P}_k=\mathbf{Q}_k.
\end{equation*}
Hence $\mathbf{Q}_k$ is the orthogonal projector onto $\mathcal{H}_{t-1}^{\perp}$.

For any feasible $\mathbf{z}$, $\mathbf{z}\in\mathcal{H}_{t-1}^{\perp}$, so $\mathbf{Q}_k\mathbf{z}=\mathbf{z}$ and $\mathbf{P}_k\mathbf{z}=\mathbf{0}$. Since
\begin{equation*}
\mathbf{r}^{(t)}=\mathbf{P}_k\mathbf{r}^{(t)}+\mathbf{Q}_k\mathbf{r}^{(t)},
\end{equation*}
we obtain
\begin{equation*}
\begin{aligned}
|\mathbf{z}-\mathbf{r}^{(t)}|_2^2 &=|\mathbf{z}-\mathbf{P}_k\mathbf{r}^{(t)}-\mathbf{Q}_k\mathbf{r}^{(t)}|_2^2 \\
&=|(\mathbf{z}-\mathbf{Q}_k\mathbf{r}^{(t)})-\mathbf{P}_k\mathbf{r}^{(t)}|_2^2.
\end{aligned}
\end{equation*}
Because $\mathbf{z}-\mathbf{Q}_k\mathbf{r}^{(t)}\in\mathcal{H}_{t-1}^{\perp}$ and $\mathbf{P}_k\mathbf{r}^{(t)}\in\mathcal{H}_{t-1}$,
\begin{equation*}
\langle \mathbf{z}-\mathbf{Q}_k\mathbf{r}^{(t)},\mathbf{P}_k\mathbf{r}^{(t)}\rangle=0.
\end{equation*}
Therefore,
\begin{equation*}
|\mathbf{z}-\mathbf{r}^{(t)}|_2^2=|\mathbf{z}-\mathbf{Q}_k\mathbf{r}^{(t)}|_2^2+|\mathbf{P}_k\mathbf{r}^{(t)}|_2^2.
\end{equation*}
The second term is independent of $\mathbf{z}$, and the first term is uniquely minimized at $\mathbf{z}=\mathbf{Q}_k\mathbf{r}^{(t)}$. Hence
\begin{equation*}
\begin{aligned}
\tilde{\mathbf{r}}^{(t)} &=\mathbf{Q}_k\mathbf{r}^{(t)}=\mathop{\arg\min}_{\mathbf{z}\in\mathbb{R}^n}|\mathbf{z}-\mathbf{r}^{(t)}|_2^2 \\
&\quad \text{s.t.}\quad \langle \mathbf{z},\mathbf{u}\rangle=0,\ \forall \mathbf{u}\in\mathcal{H}_{t-1}.
\end{aligned}
\end{equation*}

Next,
\begin{equation*}
\begin{aligned}
|\mathbf{r}^{(t)}|_2^2 &=|\mathbf{P}_k\mathbf{r}^{(t)}+\mathbf{Q}_k\mathbf{r}^{(t)}|_2^2 \\
&=|\mathbf{P}_k\mathbf{r}^{(t)}|_2^2+|\mathbf{Q}_k\mathbf{r}^{(t)}|_2^2+2\langle \mathbf{P}_k\mathbf{r}^{(t)},\mathbf{Q}_k\mathbf{r}^{(t)}\rangle.
\end{aligned}
\end{equation*}
The cross term satisfies
\begin{equation*}
\begin{aligned}
\langle \mathbf{P}_k\mathbf{r}^{(t)},\mathbf{Q}_k\mathbf{r}^{(t)}\rangle &=(\mathbf{r}^{(t)})^\top\mathbf{P}_k^\top\mathbf{Q}_k\mathbf{r}^{(t)} \\
&=(\mathbf{r}^{(t)})^\top\mathbf{P}_k(\mathbf{I}_n-\mathbf{P}_k)\mathbf{r}^{(t)} \\
&=(\mathbf{r}^{(t)})^\top(\mathbf{P}_k-\mathbf{P}_k^2)\mathbf{r}^{(t)}=0.
\end{aligned}
\end{equation*}
Thus
\begin{equation*}
|\mathbf{r}^{(t)}|_2^2=|\mathbf{P}_k\mathbf{r}^{(t)}|_2^2+|\mathbf{Q}_k\mathbf{r}^{(t)}|_2^2.
\end{equation*}
Since $\mathbf{P}_k\mathbf{r}^{(t)}=\sum_{i=1}^{k}(\mathbf{u}_i^\top\mathbf{r}^{(t)})\mathbf{u}_i$ and $\{\mathbf{u}_i\}_{i=1}^k$ is orthonormal,
\begin{equation*}
\begin{aligned}
|\mathbf{P}_k\mathbf{r}^{(t)}|_2^2 &=\left|\sum_{i=1}^{k}(\mathbf{u}_i^\top\mathbf{r}^{(t)})\mathbf{u}_i\right|_2^2 \\
&=\sum_{i=1}^{k}(\mathbf{u}_i^\top\mathbf{r}^{(t)})^2.
\end{aligned}
\end{equation*}
Therefore,
\begin{equation*}
\begin{aligned}
|\tilde{\mathbf{r}}^{(t)}|_2^2 &=|\mathbf{Q}_k\mathbf{r}^{(t)}|_2^2 \\
&=|\mathbf{r}^{(t)}|_2^2-|\mathbf{P}_k\mathbf{r}^{(t)}|_2^2 \\
&=|\mathbf{r}^{(t)}|_2^2-\sum_{i=1}^{k}(\mathbf{u}_i^\top\mathbf{r}^{(t)})^2.
\end{aligned}
\end{equation*}
\end{proof}

\section{Proof of Proposition \ref{prop:noise_snr}}
\label{app:sec:b}

\begin{proof}
Since $\mathbf{P}_k$ is a fixed rank-$k$ orthogonal projector, $\mathbf{P}_k^\top=\mathbf{P}_k$, $\mathbf{P}_k^2=\mathbf{P}_k$, and $\operatorname{rank}(\mathbf{P}_k)=k$. Hence $\mathbf{Q}_k=\mathbf{I}_n-\mathbf{P}_k$ satisfies
\begin{equation*}
\begin{aligned}
\mathbf{Q}_k^\top &=\mathbf{Q}_k,\\
\mathbf{Q}_k^2 &=\mathbf{Q}_k,\\
\operatorname{rank}(\mathbf{Q}_k) &=n-k=d.
\end{aligned}
\end{equation*}
Therefore, there exists an orthogonal matrix $\mathbf{R}\in\mathbb{R}^{n\times n}$ such that
\begin{equation*}
\mathbf{Q}_k=\mathbf{R}\begin{bmatrix}\mathbf{I}_d&\mathbf{0}\\ \mathbf{0}&\mathbf{0}\end{bmatrix}\mathbf{R}^\top.
\end{equation*}
Let $\mathbf{g}=\nu^{-1}\mathbf{R}^\top\boldsymbol{\epsilon}$. Since $\boldsymbol{\epsilon}\sim\mathcal{N}(\mathbf{0},\nu^2\mathbf{I}_n)$ and $\mathbf{R}$ is orthogonal, $\mathbf{g}\sim\mathcal{N}(\mathbf{0},\mathbf{I}_n)$. Thus
\begin{equation*}
\begin{aligned}
|\mathbf{Q}_k\boldsymbol{\epsilon}|_2^2 &=\boldsymbol{\epsilon}^\top\mathbf{Q}_k^\top\mathbf{Q}_k\boldsymbol{\epsilon}=\boldsymbol{\epsilon}^\top\mathbf{Q}_k\boldsymbol{\epsilon} \\
&=\nu^2\mathbf{g}^\top\begin{bmatrix}\mathbf{I}_d&\mathbf{0}\\ \mathbf{0}&\mathbf{0}\end{bmatrix}\mathbf{g} \\
&=\nu^2\sum_{i=1}^{d}g_i^2.
\end{aligned}
\end{equation*}
Since $g_i\overset{i.i.d.}{\sim}\mathcal{N}(0,1)$, $\sum_{i=1}^{d}g_i^2\sim\chi_d^2$. Therefore,
\begin{equation*}
\begin{aligned}
\mathbb{E}|\mathbf{Q}_k\boldsymbol{\epsilon}|_2^2 &=\nu^2\mathbb{E}\left[\sum_{i=1}^{d}g_i^2\right] \\
&=\nu^2\sum_{i=1}^{d}\mathbb{E}[g_i^2]=d\nu^2.
\end{aligned}
\end{equation*}
Also,
\begin{equation*}
\mathbb{E}|\boldsymbol{\epsilon}|_2^2=\sum_{i=1}^{n}\mathbb{E}[\epsilon_i^2]=n\nu^2.
\end{equation*}
Hence
\begin{equation*}
\begin{aligned}
\mathbb{E}|\mathbf{Q}_k\boldsymbol{\epsilon}|_2^2 &=d\nu^2=(n-k)\nu^2 \\
&=\left(1-\frac{k}{n}\right)n\nu^2 \\
&=\left(1-\frac{k}{n}\right)\mathbb{E}|\boldsymbol{\epsilon}|_2^2.
\end{aligned}
\end{equation*}

Let $X=\sum_{i=1}^{d}g_i^2\sim\chi_d^2$. For any $x>0$,
\begin{equation*}
\mathbb{P}\left(X\ge d+2\sqrt{dx}+2x\right)\le e^{-x}.
\end{equation*}
Taking $x=\log(1/\delta)$ gives
\begin{equation*}
\mathbb{P}\left(X\le d+2\sqrt{d\log(1/\delta)}+2\log(1/\delta)\right)\ge 1-\delta.
\end{equation*}
Since $|\mathbf{Q}_k\boldsymbol{\epsilon}|_2^2=\nu^2X$, with probability at least $1-\delta$,
\begin{equation*}
|\mathbf{Q}_k\boldsymbol{\epsilon}|_2^2\le \nu^2\left[d+2\sqrt{d\log(1/\delta)}+2\log(1/\delta)\right].
\end{equation*}

Next, since $\mathbf{r}=\mathbf{s}+\boldsymbol{\epsilon}$,
\begin{equation*}
\mathbf{Q}_k\mathbf{r}=\mathbf{Q}_k\mathbf{s}+\mathbf{Q}_k\boldsymbol{\epsilon}.
\end{equation*}
Let $\mathbf{a}=\mathbf{Q}_k\mathbf{s}$ and $\mathbf{b}=\mathbf{Q}_k\boldsymbol{\epsilon}$. Then
\begin{equation*}
|\mathbf{Q}_k\mathbf{r}|_2^2=|\mathbf{a}+\mathbf{b}|_2^2=|\mathbf{a}|_2^2+2\langle\mathbf{a},\mathbf{b}\rangle+|\mathbf{b}|_2^2.
\end{equation*}
For any $\eta>0$,
\begin{equation*}
\begin{aligned}
0 &\le |\sqrt{\eta}\mathbf{a}-\eta^{-1/2}\mathbf{b}|_2^2 \\
&=\eta|\mathbf{a}|_2^2-2\langle\mathbf{a},\mathbf{b}\rangle+\eta^{-1}|\mathbf{b}|_2^2.
\end{aligned}
\end{equation*}
Thus
\begin{equation*}
2\langle\mathbf{a},\mathbf{b}\rangle\le \eta|\mathbf{a}|_2^2+\eta^{-1}|\mathbf{b}|_2^2.
\end{equation*}
Therefore,
\begin{equation*}
\begin{aligned}
|\mathbf{Q}_k\mathbf{r}|_2^2 &\le (1+\eta)|\mathbf{a}|_2^2+(1+\eta^{-1})|\mathbf{b}|_2^2 \\
&=(1+\eta)|\mathbf{Q}_k\mathbf{s}|_2^2+(1+\eta^{-1})|\mathbf{Q}_k\boldsymbol{\epsilon}|_2^2.
\end{aligned}
\end{equation*}

Finally, assume $0\le k<n$ and $\mathbf{s}\neq\mathbf{0}$. Since $\mathbf{P}_k$ and $\mathbf{Q}_k$ are orthogonal complementary projectors,
\begin{equation*}
|\mathbf{s}|_2^2=|\mathbf{P}_k\mathbf{s}|_2^2+|\mathbf{Q}_k\mathbf{s}|_2^2.
\end{equation*}
By $\rho=|\mathbf{P}_k\mathbf{s}|_2^2/|\mathbf{s}|_2^2$, we have
\begin{equation*}
|\mathbf{Q}_k\mathbf{s}|_2^2=|\mathbf{s}|_2^2-|\mathbf{P}_k\mathbf{s}|_2^2=(1-\rho)|\mathbf{s}|_2^2.
\end{equation*}
Therefore,
\begin{equation*}
\begin{aligned}
\frac{\operatorname{SNR}(\mathbf{Q}_k\mathbf{r})}{\operatorname{SNR}(\mathbf{r})} &=\frac{|\mathbf{Q}_k\mathbf{s}|_2^2/((n-k)\nu^2)}{|\mathbf{s}|_2^2/(n\nu^2)} \\
&=\frac{n|\mathbf{Q}_k\mathbf{s}|_2^2}{(n-k)|\mathbf{s}|_2^2} \\
&=\frac{n(1-\rho)|\mathbf{s}|_2^2}{(n-k)|\mathbf{s}|_2^2}=\frac{1-\rho}{1-k/n}.
\end{aligned}
\end{equation*}
Thus
\begin{equation*}
\begin{aligned}
&\operatorname{SNR}(\mathbf{Q}_k\mathbf{r})>\operatorname{SNR}(\mathbf{r}) \\
\Longleftrightarrow\quad &\frac{1-\rho}{1-k/n}>1 \\
\Longleftrightarrow\quad &1-\rho>1-\frac{k}{n} \quad\Longleftrightarrow\quad \rho<\frac{k}{n}.
\end{aligned}
\end{equation*}
\end{proof}

\section{Spectral Evidence of Prediction Diversity}
\label{sec:redun_removal}

To examine the structure of the learned ensemble, Figure~\ref{fig:Spectral_Decay} compared the normalized singular value spectra of the prediction history matrices for SCBoost and GBDT on the Jasmine dataset. GBDT showed a faster spectral decay, indicating that its prediction history was more concentrated in the leading singular directions. In contrast, SCBoost produced a flatter average spectrum across 10-fold cross-validation, suggesting that its learners covered a broader empirical prediction subspace. This observation was consistent with the role of SRP, which removes the selected historical component from the residual target before fitting the next learner. The result should be interpreted as empirical evidence of reduced prediction redundancy, not as a direct validation of the fixed-subspace SNR analysis.

\begin{figure}[ht]
\begin{center}
\centerline{\includegraphics[width=\columnwidth]{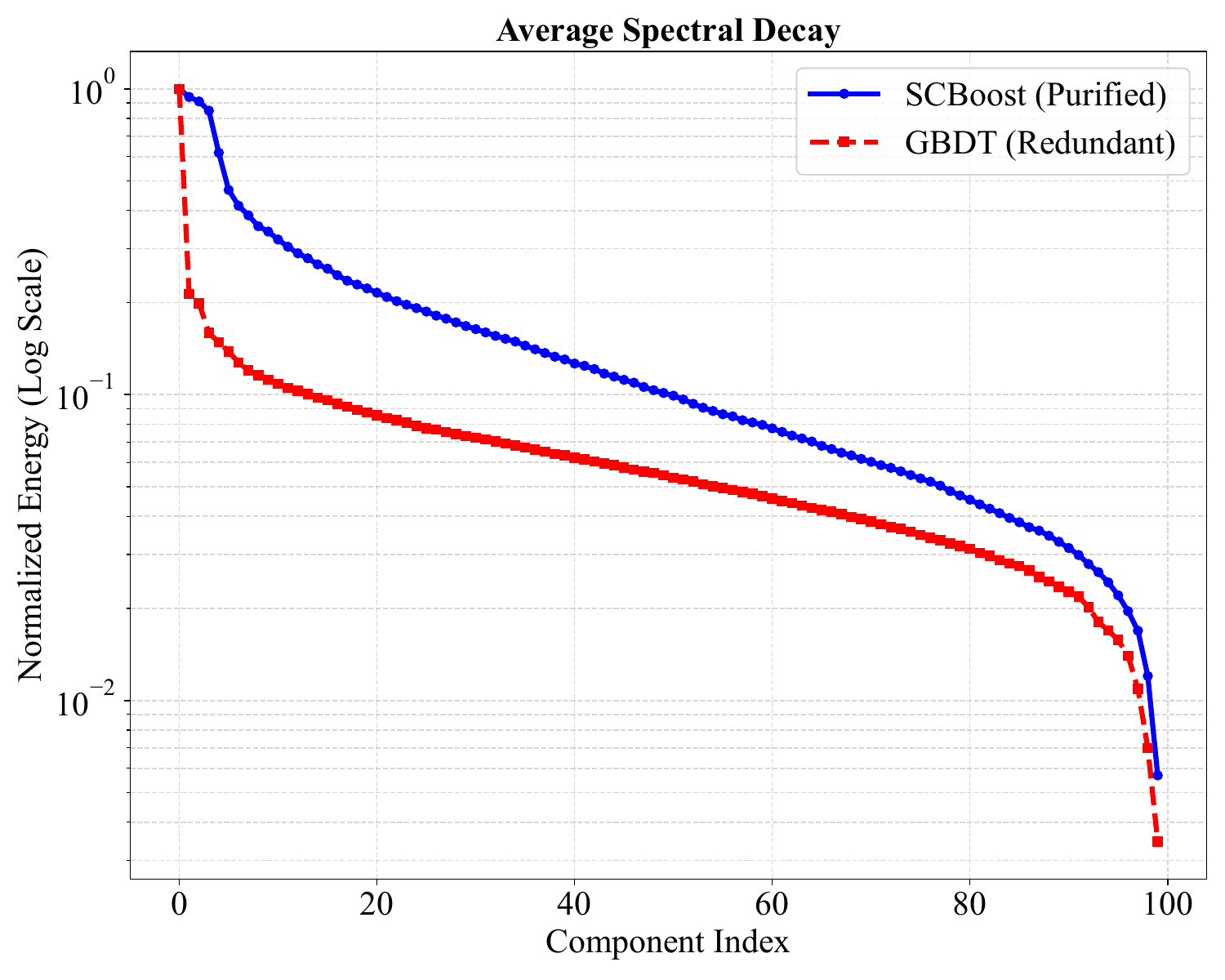}}
\caption{Normalized singular value spectra of prediction history matrices for SCBoost and GBDT on the Jasmine dataset, averaged over 10-fold cross-validation.}
\label{fig:Spectral_Decay}
\end{center}
\vskip -0.2in
\end{figure}

\section{Effect of Base Learner Capacity}
\label{sec:appendix_ablation}

We further evaluated the effect of base learner depth on SCBoost using the Jasmine dataset. The energy threshold was fixed to $\alpha=0.9$, and the tree depth was varied over $D\in[1,8]$. Figure~\ref{fig:Capacity_vs_Approximation_Gap} reported test accuracy and the redundancy ratio. Accuracy increased from shallow trees to moderate depths and reached its best value at $D=5$. The redundancy ratio decreased over the same range, suggesting that moderate-capacity learners better fitted the projected residual targets. When the depth was further increased, accuracy did not continue to improve. These results indicated that SCBoost benefited from sufficient learner capacity, but deeper trees did not necessarily provide better generalization.

\begin{figure}[ht]
\begin{center}
\centerline{\includegraphics[width=\columnwidth]{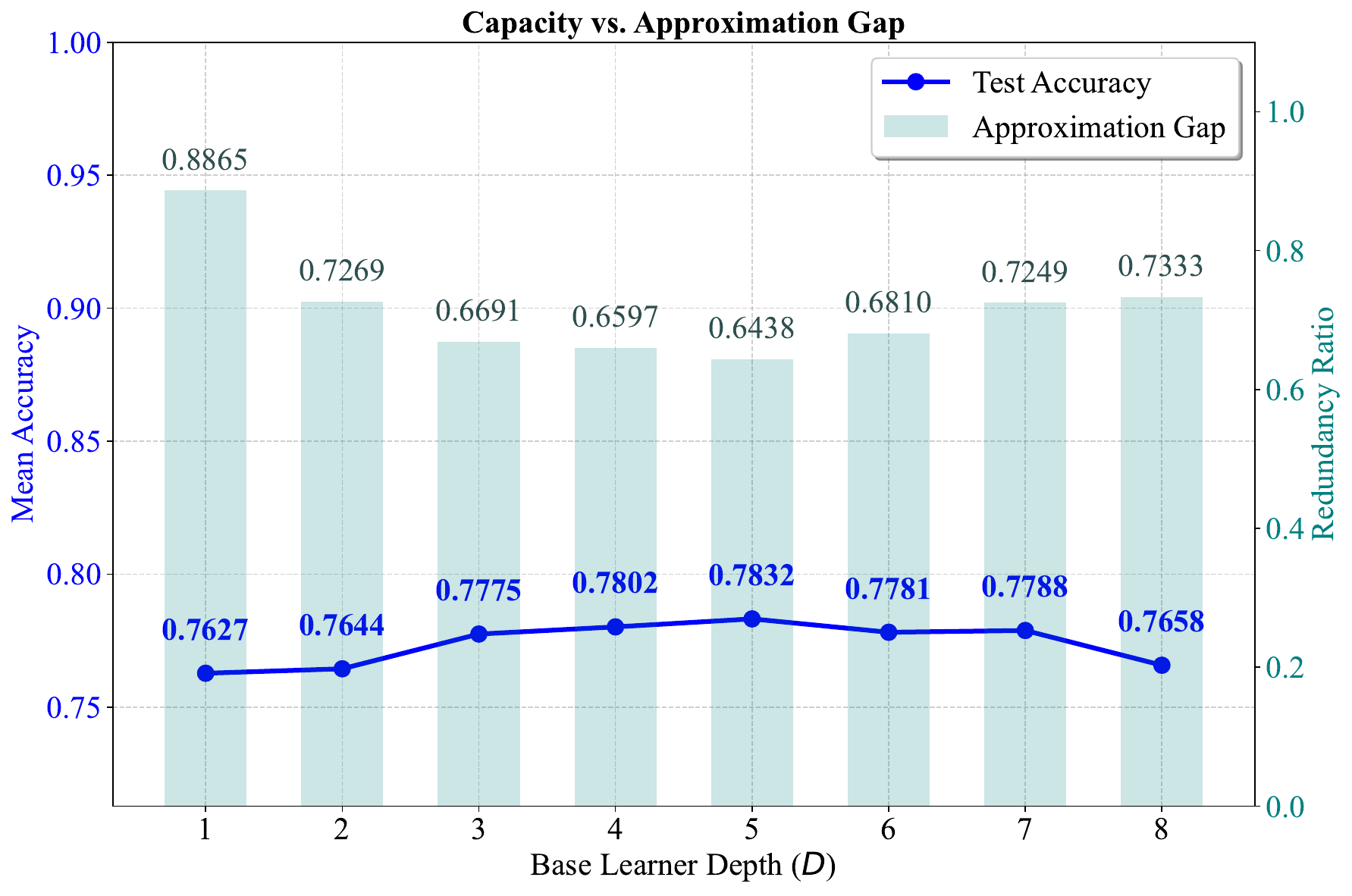}}
\caption{Effect of base learner depth on SCBoost on the Jasmine dataset. We reported test accuracy and redundancy ratio for tree depths $D\in[1,8]$ with $\alpha=0.9$.}
\label{fig:Capacity_vs_Approximation_Gap}
\end{center}
\vskip -0.2in
\end{figure}

\section{Time Complexity Analysis of SCBoost}
\label{sec:appendix_complexity}

Understanding the computational characteristics of SCBoost is essential for assessing its scalability. The time complexity is composed of three main components across $T$ iterations:
\begin{itemize}
\item \textbf{Weak Learner Training:} Training a regression tree of depth $D$ at each iteration takes $O(nd \log n)$, where $d$ is the number of features. Over $T$ iterations, this sums to $O(Tnd \log n)$.
\item \textbf{SRP:} At iteration $t$, SRP performs a truncated SVD on the prediction history matrix $\mathbf{H}^{(t-1)} \in \mathbb{R}^{n \times (t-1)}$. Note that while our implementation stores predictions instead of residuals, the dimensions remain identical to the residual history case. Thus, the computational cost remains $O(n(t-1)^2)$ per iteration. The cumulative cost over $T$ iterations is 
\begin{equation}
    \sum_{t=1}^{T} O(nt^2) = n \sum_{t=1}^{T} t^2 \approx O(nT^3).
\end{equation}
\item \textbf{CRW:} Solving the convex optimization problem for the final weights involves the covariance matrix of size $T \times T$, requiring $O(nT^2 + T^3)$.
\end{itemize}

Combining these components, the overall time complexity of SCBoost is dominated by the SRP step:
\begin{equation}
    O(nT^3 + Tnd \log n).
    \label{eq:complexity}
\end{equation}

To evaluate practical implications, Table \ref{tab:training_time_centered} shows the wall-clock training time of SCBoost against several baselines on all five datasets, using the same computational environment. The results revealed a critical limitation of our current SCBoost implementation. Compared to highly optimized frameworks like LGBM and XGB, SCBoost was consistently one to two orders of magnitude slower. For instance, on the OVA\_Uterus dataset, SCBoost took approximately 38.3 seconds, whereas LGBM and XGB finished in just 3.3 and 5.4 seconds, respectively. This significant computational overhead stems primarily from the SRP step, which performs an SVD on an ever-growing residual history matrix R at each iteration. This $O(n(t-1)^2)$ cost per iteration quickly becomes a bottleneck, especially for large $n$ or $T$.

\begin{table}
  \caption{The wall-clock time comparison in seconds across five benchmark datasets. The values represent the total time required for training each model.}
  \label{tab:training_time_centered}
  \begin{center}
    \begin{small}
      \begin{sc}
        \begin{tabular}{ccccccc}
          \toprule
          \textbf{Algorithm} & \textbf{OVA\_Uterus} & \textbf{Madelon} & \textbf{Jasmine} & \textbf{Bioresponse} & \textbf{Creditcard} \\
          \midrule
          RF                 & 1.7351               & 4.2346           & 1.2916           & 2.8999               & 1.1460              \\
          ADA                & 33.3357              & 8.2298           & 1.3550           & 13.8556              & 1.0961              \\
          GBDT               & 82.8157              & 20.6285          & 1.5783           & 27.7559              & 1.3447              \\
          XGB                & 5.3688               & 1.2161           & 0.1807           & 1.8623               & 0.1249              \\
          LGBM               & 3.2822               & 0.5968           & 0.1420           & 1.2505               & 0.1064              \\
          CAT                & 53.9535              & 8.2213           & 2.7265           & 34.7516              & 0.5754              \\
          NGB                & 78.6066              & 24.0307          & 3.6228           & 41.0628              & 1.7493              \\
          SCBoost            & 38.3050              & 29.4093          & 26.4415          & 29.1272              & 29.0681             \\
          \bottomrule
        \end{tabular}
      \end{sc}
    \end{small}
  \end{center}
  \vskip -0.1in
\end{table}

\textbf{Analysis of the Bottleneck.} Unlike traditional boosting algorithms (e.g., XGB, LGBM) which scale linearly with $T$ ($O(Tnd \log n)$), SCBoost exhibits a cubic dependence on the number of iterations ($T^3$) and linear dependence on sample size ($n$) for the SVD operations. This theoretical overhead explains the wall-clock time gap observed in Table \ref{tab:training_time_centered}. While this cost is significant, it is the price paid for the strict residual orthogonalization that eliminates redundancy. The memory complexity also grows as $O(nT)$, as the algorithm must store the full history of prediction vectors to maintain orthogonality. Consequently, the current exact implementation of SCBoost is best suited for scenarios where model compactness and diversity are prioritized over training speed, or where $n$ and $T$ are moderate.

\section{Parameter Configurations}
\label{sec:appendix_parac}
To ensure a fair and reproducible comparison of the algorithms’ intrinsic capabilities, we fixed only the random seed (random\_state=42) and n\_estimators=100 for reproducibility, while keeping all other parameters at their official recommended values as provided by the respective software packages (e.g., Scikit-learn). The proposed SCBoost algorithm was configured in the same way, following an official setup consistent with these baselines. It is well known that leading ensemble methods such as XGB and LGBM are highly sensitive to hyperparameter tuning, and their performance can often be substantially improved through data-specific optimization. Our deliberate choice to use default parameters is intended to assess the baseline robustness and practical usability of each method out of the box. This design isolates the intrinsic performance of the underlying architectures without the confounding effects of extensive, and often computationally expensive tuning, an important consideration for real-world applications where such tuning may be infeasible. For the high-dimensional OVA\_Uterus dataset, however, NGB’s official configuration (natural\_gradient=True) consistently failed with a LinAlgError. This issue, known to arise from the inversion of the Fisher Information Matrix in high-dimensional settings, persisted even with official stabilization options. To obtain a valid comparison, we disabled this mechanism by setting natural\_gradient=False, effectively reverting the model to standard gradient boosting. Results obtained under this modification are denoted with an asterisk (*) in our tables.

To ensure a fair and rigorous comparison against the performance ceiling of each baseline model, we conducted an extensive hyperparameter search. This process was automated using the \textbf{Optuna} framework \cite{akiba2019optuna}, a SOTA Bayesian optimization library. For each baseline algorithm, the optimization objective was to \textbf{maximize the mean accuracy score} obtained through a \textbf{10-fold stratified cross-validation} procedure on each dataset, with a fixed random seed (`rando\_state=42`) for reproducibility. The optimization for each model on each dataset was performed for \textbf{50 trials}, where each trial corresponded to a unique set of hyperparameters selected by Optuna's Tree-structured Parzen Estimator sampler. The specific hyperparameter search spaces were configured as follows. For \textbf{RF}, \texttt{n\_estimators} was searched in the integer range [50, 200] and \texttt{max\_depth} in [3, 8]. For \textbf{GBDT}, \textbf{XGB}, \textbf{LGBM}, and \textbf{CAT}, we searched \texttt{n\_estimators} in [50, 200], \texttt{max\_depth} in [3, 8], and \texttt{learning\_rate} on a log-uniform scale between $10^{-3}$ and 1.0. For \textbf{ADA} and \textbf{NGB}, we searched \texttt{n\_estimators} in [50, 200] and \texttt{learning\_rate} on a log-uniform scale between $10^{-3}$ and 1.0. The set of hyperparameters yielding the highest cross-validation accuracy was then used to report the final tuned performance in main Table 2. As stated in the main text, SCBoost was not subjected to this tuning process and retained its default parameters for all comparisons.

\section{Related Work}

\textbf{Boosting.} The history of boosting algorithms is a testament to the power of iterative improvement. From the foundational AdaBoost \cite{freund1997decision} and Gradient Boosting Machines \cite{friedman2001greedy} to modern titans like XGBoost \cite{chen2016xgboost} and LightGBM \cite{ke2017lightgbm}, the dominant trajectory of innovation has been overwhelmingly focused on two axes: computational acceleration (e.g., histogram-based binning, optimized tree construction) and specialized feature handling (e.g., CatBoost's target-based encoding) \cite{prokhorenkova2018catboost}. While these engineering breakthroughs have made boosting scalable to massive datasets, and recent advancements like ITBoost \cite{su2026itboost} have introduced information-theoretic approaches to enhance algorithmic robustness, they largely operate within the unchanged paradigm of sequential residual fitting. Consequently, the fundamental issue of learner redundancy has been largely sidestepped.

\textbf{Ensemble diversity.} The quest for ensemble diversity has yielded two main approaches. Randomization strategies \cite{geurts2006extremely}, effective in parallel paradigms like bagging \cite{breiman1996bagging, buhlmann2002analyzing}, are merely passive heuristics in boosting, failing to counteract its sequentially-induced redundancy. More active methods, like NCL \cite{chen2009regularized, chen2012ensemble, yu2019selective} and decorrelation penalties \cite{xie2017diverse, gu2018regularizing, zhang2025robust}, introduce diversity as a soft constraint within the loss function. This creates a contentious trade-off between accuracy and diversity, and can weaken individual learners or complicate convergence. Crucially, all these methods treat diversity as an external property to be encouraged, rather than an intrinsic goal of the learning target itself. They try to influence the learners, but they do not modify the problem each learner is asked to solve.

In summary, there exists a clear gap in the boosting paradigm: the lack of a principled, explicit, and direct mechanism that is architected specifically to eliminate redundancy during the sequential training process, without resorting to implicit loss-level penalties or passive randomization. The limitations of existing diversity-promoting methods when applied to boosting underscore the necessity for a novel approach that operates on a different level of abstraction. This critical gap motivates our work to fundamentally rethink how diversity is instilled in boosted ensembles.

\end{document}